\definecolor{cobalt}{rgb}{0.0, 0.28, 0.67}
\newcommand{\InNorm}[1]{{\left\vert\kern-0.2ex\left\vert\kern-0.2ex\left\vert #1 
    \right\vert\kern-0.2ex\right\vert\kern-0.2ex\right\vert}}                    % Induced Norm.
\newcommand{\InNormII}[1]{{\left\vert\kern-0.2ex\left\vert\kern-0.2ex\left\vert #1 
    \right\vert\kern-0.2ex\right\vert\kern-0.2ex\right\vert}_2}                    % Induced 2 Norm (Spectral Norm).
\newcommand{\InNormInfty}[1]{{\left\vert\kern-0.2ex\left\vert\kern-0.2ex\left\vert #1 
    \right\vert\kern-0.2ex\right\vert\kern-0.2ex\right\vert}_{\infty}}           % Induced Infinity norm.
\newtheorem{definition}{Definition}
\newtheorem{proposition}{Proposition}
\newtheorem{assumption}{Assumption}
\newtheorem{lemma}{Lemma}
\newtheorem{theorem}{Theorem}
\newtheorem{remark}{Remark}
\def\1{\bm{1}}
\def\eps{{\epsilon}}
\DeclareMathAlphabet{\mathsfit}{\encodingdefault}{\sfdefault}{m}{sl}
\SetMathAlphabet{\mathsfit}{bold}{\encodingdefault}{\sfdefault}{bx}{n}
\def\gG{{\mathcal{G}}}
\def\sP{{\mathbb{P}}}
\def\sR{{\mathbb{R}}}
\newcommand{\R}{\mathbb{R}}
\newcommand{\Cov}{\mathrm{Cov}}
\newcommand{\apptitle}[1]{
\def\toptitlebar{
	\hrule height4pt
	\vskip .25in}

\def\bottomtitlebar{
	\vskip .25in
	\hrule height1pt
	\vskip .25in}

\thispagestyle{empty}
\hsize\textwidth
\linewidth\hsize \toptitlebar {\centering
{\large\bf SUPPLEMENTARY MATERIAL \\ #1 \par}}
\vspace{-0.1in} \bottomtitlebar
}
\newcommand{\Xs}[1]{X^{(#1)}}
\newcommand{\Zs}[1]{Z^{(#1)}}
\newcommand{\As}[1]{A^{(#1)}}
\newcommand{\Bs}[1]{B^{(#1)}}
\newcommand{\Ms}[1]{M^{(#1)}}
\newcommand{\Os}[1]{\Omega^{(#1)}}
\newcommand{\Ps}[1]{P^{(#1)}}
\newcommand{\Ds}[1]{D^{(#1)}}
\newcommand{\epss}[1]{\eps^{(#1)}}
\newcommand{\PA}{\mathrm{PA}}
\newcommand*\rel@kern[1]{\kern#1\dimexpr\macc@kerna}
\newcommand*\widebar[1]{%
  \begingroup
  \def\mathaccent##1##2{%
    \rel@kern{0.8}%
    \overline{\rel@kern{-0.8}\macc@nucleus\rel@kern{0.2}}%
    \rel@kern{-0.2}%
  }%
  \macc@depth\@ne
  \let\math@bgroup\@empty \let\math@egroup\macc@set@skewchar
  \mathsurround\z@ \frozen@everymath{\mathgroup\macc@group\relax}%
  \macc@set@skewchar\relax
  \let\mathaccentV\macc@nested@a
  \macc@nested@a\relax111{#1}%
  \endgroup
}
\author{
    Tianyu Chen$^{*}$ \ \ 
    Kevin Bello$^{\dag\ddag}$ \ \ 
    Francesco Locatello$^{\diamond}$ \ \ 
    Bryon Aragam$^\dag$\ \ 
    Pradeep Ravikumar$^\ddag$\\
    $^*$Department of Statistics and Data Sciences, University of Texas at Austin\\
    $^\dag$Booth School of Business, University of Chicago\\
    $^{\ddag}$Machine Learning Department, Carnegie Mellon University\\
    $^{\diamond}$ Institute of Science and Technology Austria
}
\newcommand{\papertitle}{Identifying General Mechanism Shifts in Linear Causal Representations}
\title{\papertitle}
\begin{document}
\maketitle

\footnotetext[1]{Emails: tianyuchen@utexas.edu, kbello@cs.cmu.edu}

\begin{abstract}
We consider the linear causal representation learning setting where we observe a linear mixing of $d$ unknown latent factors, which follow a linear structural causal model. 
Recent work has shown that it is possible to recover the latent factors as well as the underlying structural causal model over them, up to permutation and scaling, provided that we have at least $d$ environments, each of which corresponds to perfect interventions on a single latent node (factor). 
After this powerful result, a key open problem faced by the community has been to relax these conditions: allow for coarser than perfect single-node interventions, and allow for fewer than $d$ of them, since the number of latent factors $d$ could be very large. 
In this work, we consider precisely such a setting, where we allow a smaller than $d$ number of environments, and also allow for very coarse interventions that can very coarsely \textit{change the entire causal graph over the latent factors}. 
On the flip side, we relax what we wish to extract to simply the \textit{list of nodes that have shifted between one or more environments}. 
We provide a surprising identifiability result that it is indeed possible, under some very mild standard assumptions, to identify the set of shifted nodes. 
Our identifiability proof moreover is a constructive one: we explicitly provide necessary and sufficient conditions for a node to be a shifted node, and show that we can check these conditions given observed data. 
Our algorithm lends itself very naturally to the sample setting where instead of just interventional distributions, we are provided datasets of samples from each of these distributions. 
We corroborate our results on both synthetic experiments as well as an interesting psychometric dataset. The code can be found at \url{https://github.com/TianyuCodings/iLCS}.
\end{abstract}

\section{Introduction}
\label{sec:introduction}

The objective of learning disentangled representations is to separate the different factors that contribute to the variation in the observed data, resulting in a representation that is easier to understand and manipulate \citep{bengio2013representation}. 
Traditional methods for disentanglement \citep[e.g., ][]{hyvarinen2000independent,hyvarinen1999nonlinear,burgess2018understanding,chen2018isolating,kim2018disentangling} aim to make the latent variables independent of each other.

Consider the setting of linear independent component analysis (ICA) \citep{hyvarinen2000independent}, that is, the observed variables $X \in \sR^{p}$ are generated through the process $X = G Z$, where $Z \in \sR^d$ are \textit{latent} factors, and $G \in \sR^{p\times d}$ is an \textit{unknown} ``mixing'' matrix. 
Under the key assumption that $Z$ has statistically independent components, and under some additional mild assumptions, landmark results in linear ICA show that it is possible to recover the latent variables $Z$ up to permutation and scaling \citep{eriksson2004identifiability,hyvarinen2000independent}.

However, what if instead of independent sources $Z$ we have a \textit{structural causal model} (SCM, \citep{pearl2009causality,peters2017elements}) over them? 
For instance, if the latent factors correspond to biomarkers in a biology context, or root causes in a root cause analysis context, then we expect there to be rich associations between them. 
Indeed, this question is central in the burgeoning field of causal representation learning (CRL) \citep{scholkopf2021toward,wang2021desiderata}, where we are interested in extracting the latent factors and causal associations between them given raw data.

Let us look at the simplest CRL setting where the latent variables $Z$ follow a \textit{linear} SCM, that is, $Z = A Z + \Omega^{\nicefrac{1}{2}} \epsilon$, where $A\in \sR^{d\times d}$ encodes a directed acyclic graph (DAG), $\Omega$ is a diagonal matrix that controls the scale of noise variances, and $\epsilon$ is some noise vector with zero-mean and unit-variance independent components. 
In such a case, $Z$ is a linear mixing of independent components $\epsilon$, that is, $Z = B^{-1} \epsilon$, where $B = \Omega^{\nicefrac{-1}{2}}(I_d - A)$ succinctly encodes the SCM and $I_d$ is the identity matrix of dimension $\sR^{d\times d}$.
We then have $X = G B^{-1} \epsilon$ so that ICA can only recover $B G^\dagger$ up to permutation and scaling, which does not suffice to recover the SCM $B$ since the mixing function $G$ is unknown.

Recently, \citet{seigal2022linear} showed that given the interventional distributions arising from \textit{perfect interventions} on \textit{each} latent variable in $Z$, we can recover the SCM over $Z$ up to permutation. 
But there are two caveats to this: (a) it is difficult to obtain perfect single-node interventions that only intervene on a single factor in $Z$; and (b) it is difficult to obtain $d$ number of such perfect interventional distributions or environments.

We are interested in the setting where we do not have perfect interventions: we allow for far more general interventions that can quite coarsely change the SCM, namely, \textit{soft} and \textit{hard} interventions, interventions targeting \textit{single} or \textit{multiple} nodes, as well as interventions capable of \textit{adding} or \textit{removing} parent nodes and \textit{reversing} edges. 
Moreover, we do not need as many as $d$ of these.

Our goal, however, is not to recover the entire SCM over $Z$ but simply to recover those nodes $Z$ that have incurred shifts or changes between the different interventional distributions.
This is closely related to root cause analysis~\citep{budhathoki2021did,budhathoki2022causal,ikram2022root,misiakos2024learning}, which aims to identify the origins of the observed changes in a joint distribution.
In addition, understanding the sources of distribution shifts---that is, localizing invariant/shifted conditional distributions---can benefit downstream tasks such as domain adaptation \citep{magliacane2018domain}, and domain generalization \citep{muandet2013domain,zhang2022towards}.

\paragraph{Contributions.}
Our work sits at the intersection of linear CRL \citep{seigal2022linear,jin2023learning} and \textit{direct estimation} of causal mechanism shifts~\citep{wang2018direct,ghoshal2019direct}.
The key contribution of this work is to show that it is possible to identify the \textit{latent} sources of distribution shifts in multiple datasets while \textit{bypassing} the estimation of the mixing function $G$ and the SCM $B$ over the latent variables, under very general types of interventions.
More concretely, we make the following set of contributions:
\begin{enumerate}
    \item \textbf{Identifiability:} We show that we can identify the shifted latent factors even under more general types of interventions. (Section \ref{sec:identifiability}). 
    
    \item \textbf{Algorithm:} We also provide an scalable algorithm that implements our identifiability result to infer such shifted latent factors even in the practical scenarios where we are not given the entire coarse interventional distributions but merely finite samples from each (Section \ref{sec:finite_samples}).
    
    \item \textbf{Experiments:} We corroborate our results on both synthetic experiments (Section \ref{sec:synthetic}) as well as an interesting psychometric dataset (Section \ref{sec:psycho}).
\end{enumerate}

\section{Related Work}
\label{sec:related_work}

\paragraph{Causal representation learning.} 
In contrast to our setting, which focuses on identifying shifted nodes in the latent representation, existing methods in CRL aim to recover \textit{both} the latent causal graph and the mixing function. 
Previous works have studied identifiability in various settings, such as latent linear SEMs with linear mixing \citep{seigal2022linear}, and with nonlinear mixing \citep{buchholz2023learning}; latent nonlinear SEMs with finite degree polynomial mixing \citep{ahuja2023interventional}, and with linear mixing \citep{varici2023score}; and nonlinear SEMs with nonlinear mixing~\citep{von2023nonparametric,varici2023general,jin2023learning,jiang2023learning}.  
Although these studies ensure the identifiability of causal graphs (up to permutation and scaling ambiguities), they generally rely on the assumption that \textit{each latent variable} is intervened upon in at least one environment, necessitating access to at least $d$ interventional distributions. 
Moreover, the aforementioned works assume specific types of interventions, such as hard/soft interventions and single-node interventions, and restrict changes in interventional distributions, disallowing edge reversals or the addition of new edges.
%\textcolor{blue}{
The most recent work \cite{jin2023learning} enables causal representation learning under general interventions in latent linear SEMs with linear mixing. However, this approach still requires the assumption that the number of environments $K$ is at least equal to the number of latent nodes $d$ and that there are at least $\Theta(d^2)$ interventions.
%}
If the objective is to detect variables with general mechanism changes across multiple environments—environments that may lack a consistent topological order and sufficient interventions or environments—using existing CRL methods to recover each latent graph becomes overly restrictive or even infeasible.
In contrast, we present a more flexible approach, enabling the identification of shifted variables without assuming restrictive interventions per environment or a consistent topological order of the latent graphs.

\paragraph{Direct estimation of mechanism shifts.}
The problem of directly estimating causal mechanism changes \textit{without} estimating the causal graphs has also been explored in various settings in the regime in which the causal variables are observable.
\citet{wang2018direct} and \citet{ghoshal2019direct} have focused on identifying structural differences, assuming linear SEMs as environments, and proposing methods that take advantage of variations in the precision matrices. 
More recently, \citet{chen2023iscan} studied this problem for nonlinear additive noise models, assuming that the environments originate from soft/hard interventions and leverage recent work in causal discovery via score matching. 
Finally, the concept of detecting/localizing feature shifts between two distributions has also been discussed in \cite{kulinski2020feature}, although from a non-causal perspective. 
To our knowledge, there is a gap in the literature regarding the study of these objectives when considering latent causal variables. 
We address this gap by proposing a novel approach for directly detecting mechanism shifts within the latent SCMs.

\paragraph{Independent component analysis.} 
The application of independent component analysis (ICA) \citep{comon1994independent} in the realm of causal discovery has seen significant developments. 
Linear ICA \citep{hyvarinen2000independent} and its nonlinear counterpart \citep{hyvarinen1999nonlinear} have been instrumental in causal discovery \citep{monti2020causal,shimizu2012discovery,wu2020causal} and more recently in causal latent discovery \cite{jin2023learning}. 
Beyond these established applications, our work uncovers a novel use of ICA, namely, identifying shifted nodes within the latent linear SCMs.

Given the relevance of ICA for our approach, we briefly recap it next. 
ICA considers the following setting: $X = W\epsilon$ where \(X \in \mathbb{R}^{p}\), \(\epsilon \in \mathbb{R}^d\), \(p \ge d\). 
A key assumption in ICA is that each component of \(\eps\) is independent.
Given only observations of \(X\), the goal of ICA is to estimate both \(W\) and \(\eps\). 
The objective function typically aims to maximize negentropy or non-Gaussianity, with further details given in \citep{hyvarinen2000independent}. 
The identifiability results of ICA can be summarized as follows.

\begin{theorem}[Theorems 3,4 in \cite{eriksson2004identifiability}]
\label{thm:ica}
    If every component of \(\epsilon\) is independent and at most one component is Gaussian distributed, with \(W\) being full column rank, then ICA can estimate \(W\) up to a permutation and scaling of each column, and $\epsilon$ can be recovered for some permutation up to scaling for each component.    
    Furthermore, as noted in \cite{hyvarinen2000independent}, if \(\mathbb{E}[\epsilon_i^2] = 1,\forall i \in [d]\), the estimated \(W\) and \(\epsilon\) will have ambiguities only in permutation and sign. Formally, this means
    \[
    X = W\eps = (WP^TD)(DP\epsilon),
    \]
    where \(P\) is a permutation matrix and \(D\) is a diagonal matrix with diagonal entries $\pm 1$.
    Then, the best estimate given by ICA is \(WP^TD\) and \(DP\epsilon\).
\end{theorem}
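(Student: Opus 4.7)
The plan is to show that any two factorizations $X = W\epsilon = \widetilde{W}\widetilde{\epsilon}$ satisfying the hypotheses of the theorem must be related by a permutation and a diagonal rescaling. Since $W$ has full column rank and $p \ge d$, it admits a left inverse; setting $M := \widetilde{W}^{\dagger}W$ and applying $\widetilde{W}^{\dagger}$ to both sides of $W\epsilon = \widetilde{W}\widetilde{\epsilon}$ yields $\widetilde{\epsilon} = M\epsilon$, where $M$ is an invertible $d \times d$ matrix (invertibility follows from $W$ and $\widetilde{W}$ sharing the same column space, together with $\epsilon$ having non-degenerate components). The task therefore reduces to showing $M = PD$ for some permutation $P$ and invertible diagonal $D$; this then gives $W = \widetilde{W}PD$ and $\widetilde{\epsilon} = DP\epsilon$, which is exactly the ambiguity described in the first statement of the theorem.

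The main tool is the Darmois--Skitovich theorem: if $\sum_i a_i\epsilon_i$ and $\sum_i b_i\epsilon_i$ are independent with the $\epsilon_i$ mutually independent, then every index $i$ with $a_i b_i \neq 0$ must correspond to a Gaussian $\epsilon_i$. I would apply this to each pair of distinct rows $(m_j, m_k)$ of $M$: since $\widetilde{\epsilon}_j$ and $\widetilde{\epsilon}_k$ are independent by hypothesis, any column index $i$ with $M_{ji}M_{ki} \neq 0$ forces $\epsilon_i$ to be Gaussian. Under the at-most-one-Gaussian hypothesis, the supports of any two distinct rows of $M$ can share at most one column (namely the Gaussian one, if it exists). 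Coupled with the invertibility of $M$, a short combinatorial argument then forces each row of $M$ to contain exactly one nonzero entry, yielding the desired form $M = PD$.

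For the second statement, the normalization $\mathbb{E}[\epsilon_i^2] = \mathbb{E}[\widetilde{\epsilon}_i^2] = 1$ combined with $\widetilde{\epsilon} = DP\epsilon$ immediately gives $D_{ii}^2 = 1$ for each $i$, so $D \in \{-1,+1\}^{d\times d}$. Writing $X = W\epsilon = (WP^\top D)(DP\epsilon)$ with this sign-diagonal $D$ and permutation $P$ gives the stated decomposition, and the best estimate recoverable from ICA is precisely $(WP^\top D,\, DP\epsilon)$.

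The delicate step is the combinatorial cleanup around the (at most one) Gaussian source: Darmois--Skitovich rules out overlapping row supports only at non-Gaussian indices, so a priori the Gaussian column could appear in every row of $M$. Ruling this out requires combining invertibility of $M$ with the observation that any row whose support includes both the Gaussian index \emph{and} a non-Gaussian index would, once paired against another row, reactivate the Darmois--Skitovich dichotomy at the non-Gaussian index and contradict independence. This is the point at which the at-most-one-Gaussian assumption becomes tight and must be used carefully; the rest of the argument is routine linear algebra and bookkeeping on the structure of $M$.
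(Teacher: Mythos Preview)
The paper does not prove this theorem: it is stated purely as a citation of Theorems~3 and~4 in Eriksson and Koivunen (2004), together with a normalization remark attributed to Hyv\"arinen (2000), and is used as a black-box identifiability result throughout. There is therefore no proof in the paper to compare your proposal against.

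That said, your sketch follows the standard route to linear ICA identifiability via the Darmois--Skitovich theorem and is essentially correct. One small imprecision in your ``delicate step'': when a row $k$ of $M$ has support $\{1,i\}$ with index $1$ Gaussian and $i$ non-Gaussian, pairing it against the row supported only on $\{1\}$ does \emph{not} reactivate Darmois--Skitovich at index $i$, since the overlap of the two supports is just the Gaussian index. The actual contradiction there comes from a covariance computation: if two distinct rows both have a nonzero entry in the Gaussian column and their supports are otherwise disjoint, the corresponding components of $\widetilde{\epsilon}$ have nonzero covariance $M_{j1}M_{k1}\Var(\epsilon_1)$, violating independence. Once you know at most one row touches the Gaussian column, the pigeonhole/invertibility argument you outline finishes the job cleanly. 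The unit-variance reduction to sign ambiguity is correct as you have it.
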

% Then, rewriting \(X = W\epsilon\) as \(W^\dagger X = \epsilon\), and based on Theorem \ref{thm:ica}, in the population setting, the best estimation we can achieve for \(W^\dagger\) and \(\epsilon\) are expressed as \(DPW^\dagger\) and \(DP\epsilon\), respectively.

\section{Problem Setting}
\label{sec:preliminaries}

Consider a random vector $X$ in $\sR^p$ that is a linear mixing of $d$ latent variables $Z = (Z_1, \ldots , Z_d)$:
$$X = G Z.$$
Here the latent variables in $Z$ follows a linear SCM \citep{pearl2009causality,peters2017elements}, that is,
$$Z = A Z + \Omega^{\nicefrac{1}{2}} \epsilon$$
where $A \in \sR^{d \times d}$ corresponds to a DAG $\gG$ such that $A_{jk} \neq 0$ iff there exists an edge $j \rightarrow k$ in the DAG $\gG$; $\Omega \in \sR^{d \times d}$ is a diagonal matrix with positive entries, and $\epsilon \in \R^d$ is a random vector with independent components with mean zero and variance one, i.e., that $\Cov(\epsilon) = I_d$. 
Denoting $B = \Omega^{-\nicefrac{1}{2}}(I_d - A)$, we have that:
$$Z = B^{-1} \epsilon.$$

We assume that we observe $K\geq 2$ generalized interventional distributions that keep the mixing map $G$ fixed but allow for generalized interventions to $Z$. 
That is, for environment $k \in [K]$ we have,
$$\Xs{k} = G \Zs{k},$$
where $\Zs{k} = \As{k} \Zs{k} + (\Os{k})^{\nicefrac{1}{2}} \epss{k}$. 
Similarly, we have $\Zs{k} = (\Bs{k})^{-1} \epss{k}$, where $\Bs{k} = (\Os{k})^{-\nicefrac{1}{2}}(I_d - \As{k})$.

Notably, we allow generalized interventions that allow for $\As{k}$ to be arbitrary, which includes \textit{soft} and \textit{hard} interventions, interventions targeting \textit{single} or \textit{multiple} nodes, as well as interventions capable of \textit{adding} or \textit{removing} parent nodes and \textit{reversing} edges. 
This contrasts with the existing literature on CRL, where single-node soft/hard interventions are the standard assumption \citep{von2023nonparametric,seigal2022linear,buchholz2023learning,ahuja2023interventional}.
See Figure \ref{fig:toy_example}, for a toy example of what we aim to estimate.

\begin{remark}
\label{remark:arbitrary_dags}
Since we allow for general types of interventions, we can take any of the given environments as the canonical ``observational'' distribution with respect to which we observe interventions, or simply that we observe $k$ interventions of an unknown observational distribution.
This is a clear distinction from the standard setting in CRL \citep{ahuja2023interventional,von2023nonparametric,varici2023score,jin2023learning} which requires to know which environment is a suitable observational distribution.
\end{remark}

\begin{figure*}[!t]
    \centering
    \includegraphics[width=\textwidth]{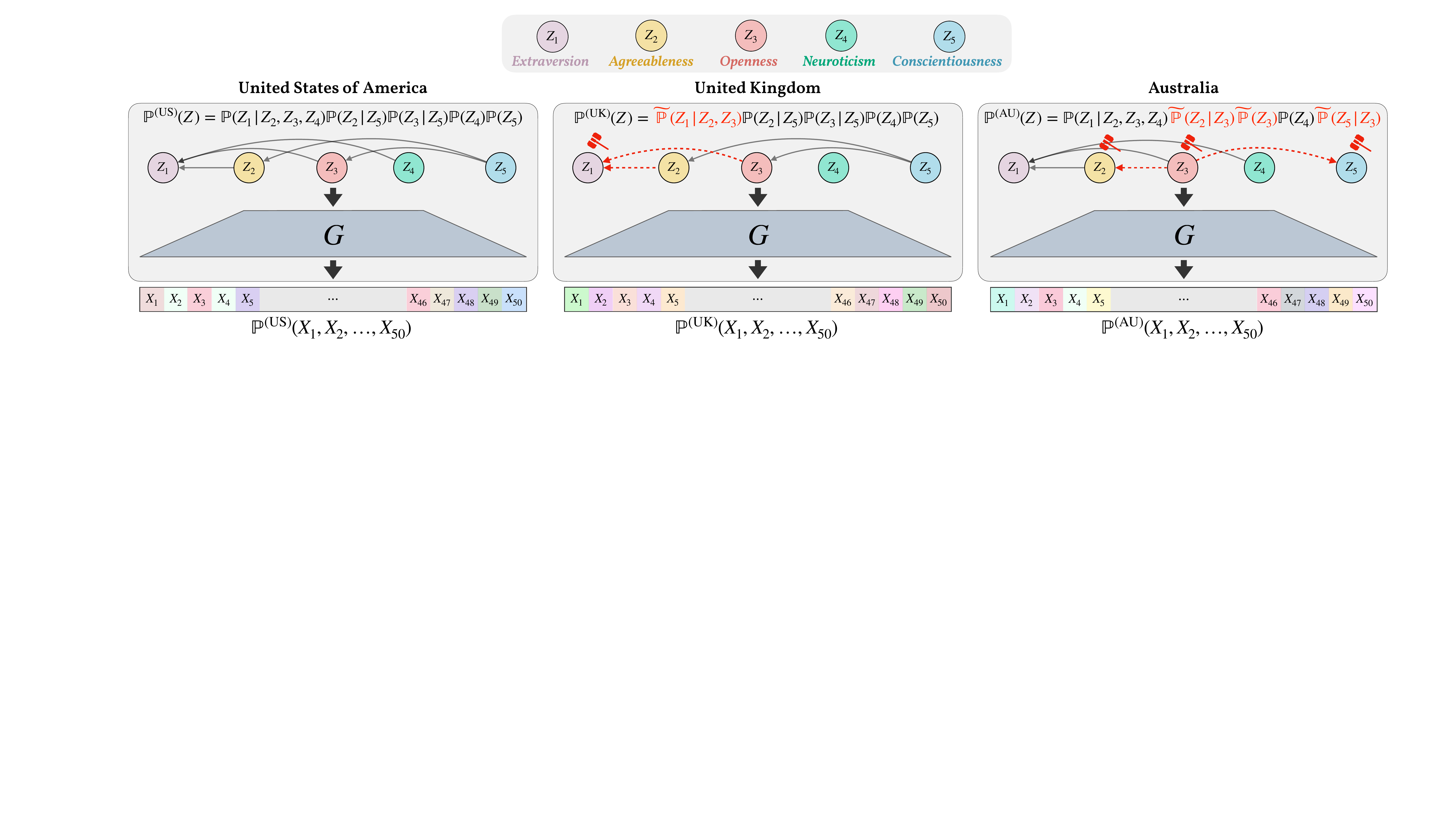}
    \caption{
    We have 5 \textit{latent} variables $Z$ which in this case relate to personality concepts, and the observations $X$ represent the scores of 50 questions from a psychometric personality test.
    The latent variables $Z$ follow a linear SCM, while the \textit{unknown} shared linear mixing is a full-rank matrix $G \in \sR^{50\times 5}$.
    Then, for environment $k = \{\mathrm{US, UK, AU}\}$, the observables are generated through $\Xs{k} = G \Zs{k}$. 
    Here, $\sP^{(\mathrm{US})}$ is taken as the ``observational'' (reference) distribution, and the distribution shifts in $\sP^{(\mathrm{UK})}$ and $\sP^{(\mathrm{AU})}$ are due to changes in the causal mechanisms of $\{Z_1\}$ and $\{Z_2, Z_3, Z_5\}$, respectively.
    Finally, the types of interventions are general; for UK, the edge $Z_4 \to Z_1$ is removed and the dashed red lines indicate changes in the edge weights to $Z_1$; for AU, $Z_2$ was intervened by removing $Z_5\to Z_2$ and \textit{adding} $Z_3\to Z_2$, while the edge $Z_5\to Z_3$ was \textit{reversed}, thus changing the mechanisms of $Z_3$ and $Z_5$.
    Thus, we aim to identify $\{Z_1\}$ and $\{Z_2, Z_3, Z_5\}$.
    }
    \label{fig:toy_example}
\end{figure*}
To develop our identifiability result and algorithm, we will make additional assumptions on the noise distributions of the linear SEMs. 
\begin{assumption}[Noise Assumptions]
\label{assump:noise}
    For any environment $k \in [K]$, let $\epss{k} = (\epss{k}_1,\ldots, \epss{k}_d)$ be the vector of $d$ independent noises with $\Cov(\epss{k}) = I_d$. We have:
    \begin{enumerate}
        \item Identically distributed across environments: $\sP(\epss{k}) = \sP(\epss{k'})$, for all $k' \neq k$. \label{item1}
        \item Non-Gaussianity: At most one noise component $\epss{k}_i$ is Gaussian distributed. \label{item2}
        \item Pairwise differences: For any $i\neq j$, we have $\mathbb{P}(\epss{k}_i)\neq \mathbb{P}(\epss{k}_j)$ and $\mathbb{P}(\epss{k}_i)\neq \mathbb{P}(-\epss{k}_j)$. \label{item3}
    \end{enumerate} 
\end{assumption}

Assumption \ref{assump:noise}.\ref{item1} is usually assumed for learning causal models from multiple environments \citep{mameche2024learning,buchholz2023learning}.
Assumption \ref{assump:noise}.\ref{item2} is typically made in causal discovery methods, as detailed in seminal works such as \cite{shimizu2006linear, shimizu2009estimation, hyvarinen2000independent, silva2006learning} and is considered a more realistic assumption \citep{montagna2023causal}. 
Assumption \ref{assump:noise}.\ref{item3} is generally satisfied in a generic sense; that is, when probability distributions on the real line are randomly selected, they are pairwise different with probability one. This assumption is also adopted in \cite{sturma2023unpaired,jin2023learning}.

% We assume access to a test function psi that maps each noise random variable to R s.t. psi(epsilon^k_i) = psi(-epsilon^k_i), and psi(epsilon^k_i) \neq psi(epsilon^k_j) if epsilon^k_i and epsilon^k_j are not identically distributed. We assume that psi(epsilon^k_i) = \mathbb{E}_{u \sim P_epsilon^k_i} \ell_\psi(u) for some loss function \ell_psi : R -> R
\begin{assumption}[Test Function]
\label{assump:test_function}
    We assume access to a test function $\psi$ that maps each noise r.v. to $\sR$ s.t. \(\psi(\epss{k}_i) = \psi(-\epss{k}_i)\), and \(\psi(\epss{k}_i) \neq \psi(\epss{k}_j)\) if \(\epss{k}_i\) and \(\epss{k}_j\) are not identically distributed. 
    % W.l.o.g. we assume that the components of the noise vector $\epsilon^{(k)}$ are ordered such that $\psi(\epsilon^{(k)}_i) < \psi(\epsilon^{(k)}_j)$ for all $i < j$.
\end{assumption}
This assumption states that we can access a test function that can help differentiate the noise components. 
One coarse example is $\psi(y) = \mathbb{P}(|y| \le 1)$. 
This assumption is introduced to better understand our method workflow in Section \ref{sec:main_alg}, but it is not completely necessary. We discuss how to relax this assumption in Appendix \ref{sec:discuss_psi}.
Next, we formally define a mechanism shift.

\begin{definition}[Latent Mechanism Shifts]
\label{def:node_shifted}
    Let \(\PA(\Zs{k}_i)\) denote the set of parents of \(\Zs{k}_i\). 
    A latent variable \(Z_i\) is called a latent shifted node within environments $k$ and $k'$, if and only if:
    $$
    \mathbb{P}(\Zs{k}_i \mid \PA(\Zs{k}_i)) \neq \mathbb{P}(\Zs{k'}_i \mid \PA(\Zs{k'}_i)).
    $$ 
\end{definition}

\begin{remark}
Following Definition \ref{def:node_shifted}, $Z_i$ is a latent shifted node between environments $k$ and $k'$ if:
(1) The $i$-th rows of $A^{(k)}$ and $A^{(k')}$ are different; 
(2) \(\Omega^{(k)}_{ii} \neq \Omega^{(k')}_{ii}\); or (3) both.    
\end{remark}

Definition \ref{def:node_shifted} aligns with those previously discussed in \cite{wang2018direct,ghoshal2019direct,chen2023iscan}, with the key difference that we consider changes in the causal mechanisms of the latent causal variables.
However, note that our results also contribute to the setting in which causal variables are observable considering that the mixing function is the identity matrix, that is, $G = I_d$.
% We make this statement precise in Corollary \ref{cor:observable}.

\section{Identifying Shifts in Latent Causal Mechanisms}
\label{sec:main_alg}

Following the setup outlined in the previous section, our focus now turns to developing an algorithm to identify latent shifted nodes, given data from multiple environments. 
First, note that we can write the overall model as a linear ICA problem, where, for any environment $k$, the observation $\Xs{k}$ is a linear combination of independent components $\epss{k}$. Specifically, we have
\begin{align*}
\label{eq:crl_to_ica}
    \Xs{k} = G \Zs{k} = G(\Bs{k})^{-1}\epss{k}
\end{align*}
Under the mild conditions given in Assumption \ref{assump:noise}, from classical ICA identifiability results stated in Theorem \ref{thm:ica}, we can identify $G(\Bs{k})^{-1}$ up to permutation and sign flip. 
Let $\Ms{k}=\Bs{k}H$ where $H=G^\dagger$. 
Then, we can only identify $\Ms{k}$ up to permutation and sign flip, which does not suffice to identify the latent SCM encoded in $\Bs{k}$. 
In sum, what we can only obtain from ICA is
\begin{align*}
    \widebar{M}^{(k)}=\Ps{k}\Ds{k}\Bs{k}H
\end{align*}
where $\Ps{k}$ is a permutation matrix, and $\Ds{k}$ is a diagonal matrix with $-1$ or $+1$ on its diagonal. 
As \citet{seigal2022linear} points out, it is not possible to identify $\Bs{k}$ further given \textit{generalized interventions}. Our first result is that our present mild assumptions suffice to infer shifted nodes.

\begin{theorem}[Identifiability]
\label{thm:identy}
Given access to $K\geq 2$ environments, assume that \ref{assump:noise} and \ref{assump:test_function} hold for all environments. Then, all latent shifted nodes are identifiable.
\end{theorem}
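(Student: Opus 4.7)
The plan is to reduce the problem to a per-environment linear ICA, then use the test function to align the unknown per-environment permutations, and finally read off the shifted nodes by comparing rows across environments. Starting from $\Xs{k} = G(\Bs{k})^{-1}\epss{k}$, the noise independence and non-Gaussianity parts of Assumption \ref{assump:noise} let me invoke Theorem \ref{thm:ica} on each environment separately. This produces the matrices $\widebar{M}^{(k)} = \Ps{k}\Ds{k}\Bs{k}H$ together with recovered sources $\widebar{\epsilon}^{(k)} = \Ds{k}\Ps{k}\epss{k}$, where $H = G^{\dagger}$ is shared across environments, $\Ps{k}$ is an unknown permutation, and $\Ds{k}$ is an unknown $\pm 1$ diagonal.

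Next I would align the permutations across environments. Because Assumption \ref{assump:noise}.\ref{item1} gives $\sP(\epss{k}) = \sP(\epss{k'})$ and Assumption \ref{assump:noise}.\ref{item3} ensures that the marginals of the sources within an environment are pairwise distinct, even after arbitrary sign flips, the $\psi$-value $\psi(\widebar{\epsilon}^{(k)}_j) = \psi(\epss{k}_{\pi^{(k)}(j)})$ (using the sign-invariance in Assumption \ref{assump:test_function}) is a well-defined label of the underlying source. Matching coordinates in different environments by equality of these labels recovers a common indexing of sources. After this relabeling the $i$-th row of the (aligned) $\widebar{M}^{(k)}$ becomes $\sigma_i^{(k)} (\Bs{k})_{i,:} H$ for some unknown sign $\sigma_i^{(k)} \in \{-1,+1\}$, one per node per environment.

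For the final step, note that by Definition \ref{def:node_shifted} the latent variable $Z_i$ is non-shifted between $k$ and $k'$ exactly when the $i$-th rows of $\Bs{k}$ and $\Bs{k'}$ coincide. Because $H \in \R^{d\times p}$ has full row rank (as $G$ is full column rank with $p\ge d$), equality of the aligned rows of $\widebar{M}^{(k)}$ and $\widebar{M}^{(k')}$ up to a global sign is equivalent to $(\Bs{k})_{i,:} = \pm (\Bs{k'})_{i,:}$. The negative case can be ruled out by the positivity of the diagonal: $(\Bs{k})_{ii} = (\Os{k}_{ii})^{-1/2} > 0$ for every $k$, so a row flip would force a sign change on a positive scalar, a contradiction. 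Thus I can declare $Z_i$ shifted iff the aligned rows disagree up to sign, a test that depends only on the observed interventional distributions, yielding identifiability.

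The main obstacle I expect is the bookkeeping for the joint permutation-and-sign ambiguity: I must show that the test function truly disentangles the cross-environment permutation (not merely within one environment) and that the residual per-environment sign flips are benign for the shift test. The diagonal positivity of $\Bs{k}$ is the key structural fact that converts an a priori nontrivial sign ambiguity into a clean decision rule; a second subtlety is that I am implicitly relying on $H$ being the \emph{same} across all environments, which is what allows row-wise comparisons of $\Bs{k}H$ to translate into row-wise comparisons of $\Bs{k}$ itself.
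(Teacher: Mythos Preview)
Your proposal is correct and follows essentially the same route as the paper: per-environment ICA, alignment of the unknown permutations via the sign-invariant test function $\psi$ together with Assumptions \ref{assump:noise}.\ref{item1} and \ref{assump:noise}.\ref{item3}, and then a row comparison where the full row rank of $H=G^\dagger$ and the positivity of $(\Bs{k})_{ii}=(\Os{k}_{ii})^{-1/2}$ kill the residual sign ambiguity. The only blemish is the ordering in $\widebar{\epsilon}^{(k)} = \Ds{k}\Ps{k}\epss{k}$, which should be $\Ps{k}\Ds{k}\epss{k}$ to be consistent with your expression for $\widebar{M}^{(k)}$; this does not affect the argument.
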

% \begin{proof}[Proof Sketch]
% \vspace{-7pt}
%     The intuition behind the proof is that the mechanism information is entirely contained in $\Bs{k}$. Since $H$ is a full row rank matrix, $\Bs{k}H$ will retain all the information. Thus, to detect the mechanism shift information, if we can eliminate the influence brought by the inconsistent ${P}^{(k)}$ and ${D}^{(k)}$, it is sufficient to infer the shifted nodes. The formal proof is discussed in Section \ref{sec:identifiability}.
%     \vspace{-7pt}
% \end{proof}

% \begin{theorem}
% \label{thm:shifted_nodes}
%     Suppose we have $K$ environments and latent variables $\Zs{k}$, for $k\in[K]$, such that Assumptions \ref{assump:latent_graph} and \ref{assump:mixing} hold. 
%     Then \(Z_i\) is identified as a latent shifted node between environments \(k\) and \(k'\) if and only if \(\left[\Bs{k}H\right]_i \neq \left[\Bs{k'}H\right]_i\), where  \(\left[\Bs{k}H\right]_i\) represents the \(i\)-th row of \(\Bs{k}H\).
% \end{theorem}

% We found that we can make $\Ps{k}$ consistent by using noise information $\epss{k}$, which is proven in Theorem \ref{thm:our_ident}, then we can identify all shifted nodes.

An interesting facet of our identifiability result is that it is \textit{constructive}. 
In the next subsection we will provide an explicit algorithm to infer the shifted nodes and prove the main theorem above.

\subsection{Constructive identifiability}
\label{sec:identifiability}
% Following our discussion on ICA, we now present our  and algorithm of our study, which outline the criteria for identifying latent shifted nodes.

Consider $\epss{k}=\Bs{k}H\Xs{k}$ and $\widebar{\epsilon}^{(k)}= \widebar M^{(k)}\Xs{k}=\Ps{k}\Ds{k}\Bs{k}H\Xs{k}=\Ps{k}\Ds{k}\epss{k}$,
where $\widebar{\epsilon}^{(k)}$ and $\widebar M^{(k)}$ are the output of ICA, which contain the permutation and sign flip ambiguities given by $\Ps{k}\Ds{k}$. 

Obtaining a consistent ordering of the noise components across all environments is equivalent to finding $\Ps{k}$. 
Under Assumption \ref{assump:test_function}, and without loss of generality, we consider that $(\epsilon^{(k)}_1, \dots, \epsilon^{(k)}_d)$ are in increasing order with respect to their $\psi$ values. Since $\psi$ is invariant to sign flip, we can calculate $\psi(\widebar{\epsilon}_i^{(k)})$ for all $i \in [d]$ and sort the calculated $\psi$ values in increasing order. Let $\widebar{P}^{(k)}$ denote the sorting permutation with respect to $\psi$, so that post-sorting, we get $\widebar{P}^{(k)} \widebar{\epsilon}^{(k)}$.
\begin{remark}
    In Appendix \ref{sec:discuss_psi}, we discuss how to relax the assumption on the test function $\psi$.
\end{remark}
\begin{proposition}
\label{prop:M}
    $\widebar{P}^{(k)} = \big({P}^{(k)}\big)^T$, i.e., $\widebar{P}^{(k)}$ is the inverse permutation of the ICA scrambling.
\end{proposition}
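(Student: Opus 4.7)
The plan is to unpack the ICA scrambling at the component level and track how the sort by $\psi$ undoes it. Write $\pi$ for the permutation corresponding to $P^{(k)}$, so that $(P^{(k)}\epsilon^{(k)})_i = \epsilon^{(k)}_{\pi(i)}$. Then by the definition of $\widebar{\epsilon}^{(k)}$,
\[
\widebar{\epsilon}^{(k)}_i = D^{(k)}_{ii}\, \epsilon^{(k)}_{\pi(i)}, \qquad i\in[d],
\]
where each $D^{(k)}_{ii}\in\{\pm 1\}$. Since Assumption \ref{assump:test_function} guarantees $\psi(y)=\psi(-y)$, the sign flip disappears inside $\psi$, giving the key identity $\psi(\widebar{\epsilon}^{(k)}_i) = \psi(\epsilon^{(k)}_{\pi(i)})$.

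Next I would argue that the $\psi$-values are pairwise distinct, so the sort is unambiguous. By Assumption \ref{assump:noise}.\ref{item3}, for $i\neq j$ the distributions of $\epsilon^{(k)}_i$ and $\epsilon^{(k)}_j$ differ and also $\epsilon^{(k)}_i$ differs in distribution from $-\epsilon^{(k)}_j$; by the separating property of $\psi$ in Assumption \ref{assump:test_function}, this forces $\psi(\epsilon^{(k)}_i)\neq \psi(\epsilon^{(k)}_j)$. Under the WLOG convention that $\psi(\epsilon^{(k)}_1)<\psi(\epsilon^{(k)}_2)<\cdots<\psi(\epsilon^{(k)}_d)$, the unique permutation $\bar\pi$ that produces an increasing sequence of $\psi(\widebar{\epsilon}^{(k)}_{\bar\pi(i)}) = \psi(\epsilon^{(k)}_{\pi(\bar\pi(i))})$ is $\bar\pi=\pi^{-1}$.

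Finally I would translate this back to matrices. If $\widebar{P}^{(k)}$ is the permutation matrix corresponding to $\bar\pi=\pi^{-1}$, then because the transpose of a permutation matrix realizes the inverse permutation, $\widebar{P}^{(k)} = (P^{(k)})^{T}$, which is precisely the claim.

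The main obstacle is essentially bookkeeping — making sure the two conventions (permutation acting on indices vs.\ on vectors) agree, and verifying that the $\psi$-values are strictly distinct so that the sort returns a well-defined permutation. Once Assumption \ref{assump:noise}.\ref{item3} together with the separating property of $\psi$ in Assumption \ref{assump:test_function} rules out ties, the rest is a one-line matrix identity.
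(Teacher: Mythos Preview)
Your proof is correct and follows exactly the reasoning the paper relies on; the paper itself treats this proposition as essentially immediate, only remarking in the proof of Theorem~\ref{thm:shift_node} that ``post sorting with respect to $\psi$, we eliminate the ambiguity of $P^{(k)}$,'' whereas you spell out the permutation bookkeeping explicitly. One tiny slip worth fixing: since $\widebar{\epsilon}^{(k)}=P^{(k)}D^{(k)}\epsilon^{(k)}$, the $i$-th component is $D^{(k)}_{\pi(i)\pi(i)}\,\epsilon^{(k)}_{\pi(i)}$ rather than $D^{(k)}_{ii}\,\epsilon^{(k)}_{\pi(i)}$, but this is immaterial because the sign disappears inside $\psi$ anyway.
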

% \begin{proof}
% \vspace{-5pt}
% Note that the sorted order with respect to $\psi$ of all $\widebar{P}^{(k)}$ is the same given Assumption \ref{assump:test_function}.
% \vspace{-10pt}
% \end{proof}

From Proposition \ref{prop:M}, we thus find that we can unscramble the permutation $P^{(k)}$ by sorting with respect to $\psi$. We get $\widebar{P}^{(k)}\widebar{\epsilon}^{(k)} = \widebar{P}^k P^{(k)} D^{(k)} \epsilon^{(k)} = D^{(k)} \epsilon^{(k)}$ from the above proposition. In other words, we can extract $\widetilde{\epsilon}^{(k)} = D^{(k)} \epsilon^{(k)}$ via $\widetilde{M}^{(k)} = \widebar{P}^{(k)} \widebar{M}^{(k)} = D^{(k)} B^{(k)} H = D^{(k)} M^{(k)}$ after ICA and sorting by $\psi$.

\begin{proposition}
\label{prop:shifted_nodes}
    Given access to $K\geq 2$ environments, assume that \ref{assump:noise} holds. 
    Then, \(Z_i\) is identified as a latent non-shifted node between environments \(k\) and \(k'\) if and only if $M^{(k)}_i=M^{(k')}_{i}$, where  $M^{(k)}_i$ represents the \(i\)-th row of $M^{(k)}$, and $\Ms{k}=\Bs{k}H$.
\end{proposition}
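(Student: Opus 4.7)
My plan is to prove the biconditional by first translating the statistical notion of a non-shifted node into an algebraic equality between rows of $B^{(k)}$, and then transferring this to the corresponding rows of $M^{(k)} = B^{(k)} H$.

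First I would combine Definition \ref{def:node_shifted} with Assumption \ref{assump:noise}.\ref{item1}: since the joint law of $\epss{k}$ agrees across environments, so does every marginal $\sP(\epss{k}_i)$, and hence the $i$-th structural equation has the same standardized driving noise in every environment. It follows that the conditional mechanism $\sP(Z_i \mid \PA(Z_i))$ is preserved between environments $k$ and $k'$ if and only if both the coefficient row $A^{(k)}_i$ and the noise variance $\Omega^{(k)}_{ii}$ are preserved---precisely the content of the remark following Definition \ref{def:node_shifted}. Since $B^{(k)} = (\Omega^{(k)})^{-\nicefrac{1}{2}}(I_d - A^{(k)})$ and $\Omega^{(k)}$ is diagonal, non-shiftedness of $Z_i$ between environments $k$ and $k'$ is therefore equivalent to the single row equality $B^{(k)}_i = B^{(k')}_i$.

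The forward direction is then immediate: if $B^{(k)}_i = B^{(k')}_i$, right-multiplying by $H$ gives $M^{(k)}_i = B^{(k)}_i H = B^{(k')}_i H = M^{(k')}_i$. For the converse, suppose $M^{(k)}_i = M^{(k')}_i$ and set $v \defeq B^{(k)}_i - B^{(k')}_i$, so that $vH = 0$. The mixing matrix $G \in \sR^{p\times d}$ is assumed full column rank (the standing ICA assumption underlying Theorem \ref{thm:ica} and the derivation of $\widebar{M}^{(k)}$), so its Moore--Penrose pseudoinverse satisfies $HG = G^{\dagger} G = I_d$; thus $v = vHG = 0$, giving $B^{(k)}_i = B^{(k')}_i$. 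Comparing the $i$-th entry of both rows (which equals $(\Omega^{(k)}_{ii})^{-\nicefrac{1}{2}}$ since $A_{ii}=0$ by acyclicity, and is positive) yields $\Omega^{(k)}_{ii} = \Omega^{(k')}_{ii}$, and the remaining entries then yield $A^{(k)}_i = A^{(k')}_i$. By the first step, $Z_i$ is non-shifted.

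The main step requiring care is not technical but conceptual: verifying that matching the linear parameters $(A^{(k)}_i, \Omega^{(k)}_{ii})$ genuinely implies matching conditional distributions, which uses Assumption \ref{assump:noise}.\ref{item1} to preclude different noise laws yielding the same conditional through a different parametrization. Once this translation is in place, the remainder of the proof reduces to the elementary linear-algebraic fact that $H = G^{\dagger}$ has trivial left null space, which is exactly the lever that lets us pass the row-wise equality from $B$ to $M$ and back.
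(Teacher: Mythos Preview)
Your proof is correct and follows essentially the same route as the paper: reduce ``non-shifted'' to the row equality $B^{(k)}_i = B^{(k')}_i$ via the diagonal form $B^{(k)} = (\Omega^{(k)})^{-1/2}(I_d - A^{(k)})$ with $A_{ii}=0$, and then pass to $M^{(k)}_i = B^{(k)}_i H$ using that $H=G^\dagger$ has trivial left null space. The paper packages the latter step as a standalone lemma (full row rank of $H$), while you invoke $HG=I_d$ directly; these are the same argument, so there is no substantive difference.
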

All formal proofs are given in Appendix \ref{app:all_proofs}.
Our next result shows the identifiability of shifted nodes in the unscrambled matrix $\widetilde{M}^{(k)}$.
\begin{theorem}
\label{thm:shift_node}
$Z_i$ is identified as a non-shifted node if and only if $\widetilde{M}_i^{(k)}=\widetilde{M}_i^{(k')}$ or $\widetilde{M}_i^{(k)}=-\widetilde{M}_i^{(k')}$.
\end{theorem}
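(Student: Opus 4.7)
The plan is to reduce Theorem \ref{thm:shift_node} to Proposition \ref{prop:shifted_nodes} by carefully unwinding the sign-flip ambiguity introduced by $D^{(k)}$, and to show that the remaining sign-flip indeterminacy is rendered harmless by the fact that the diagonal entries of $B^{(k)}$ are strictly positive.

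First I would write out the definitions: since $\widetilde{M}^{(k)} = D^{(k)} M^{(k)}$ with $D^{(k)}$ diagonal and $D^{(k)}_{ii} \in \{-1,+1\}$, the $i$-th row obeys $\widetilde{M}^{(k)}_i = D^{(k)}_{ii}\, M^{(k)}_i$, and similarly for $k'$. The forward direction is then immediate: if $Z_i$ is non-shifted then Proposition \ref{prop:shifted_nodes} gives $M^{(k)}_i = M^{(k')}_i$, so
\[
\widetilde{M}^{(k)}_i \;=\; D^{(k)}_{ii}\, M^{(k)}_i \;=\; D^{(k)}_{ii} D^{(k')}_{ii}\, \widetilde{M}^{(k')}_i \;\in\; \bigl\{+\widetilde{M}^{(k')}_i,\, -\widetilde{M}^{(k')}_i\bigr\}.
\]

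For the backward direction, suppose $\widetilde{M}^{(k)}_i = \alpha\, \widetilde{M}^{(k')}_i$ with $\alpha\in\{-1,+1\}$. Substituting the expressions for $\widetilde{M}^{(k)}_i$ gives $M^{(k)}_i = \beta\, M^{(k')}_i$ with $\beta := \alpha\, D^{(k)}_{ii} D^{(k')}_{ii} \in \{-1,+1\}$. Using $M^{(k)} = B^{(k)} H$ and the fact that $H = G^\dagger$ has full row rank $d$ (because $G$ is full column rank, which is implicit in the problem setting for ICA identifiability via Theorem \ref{thm:ica}), right-multiplication by $H$ is injective on rows, so this forces $B^{(k)}_i = \beta\, B^{(k')}_i$ as row vectors in $\mathbb{R}^d$.

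The key step, and the one I expect to be the real crux, is ruling out $\beta = -1$. For this I would invoke the definition $B^{(k)} = (\Omega^{(k)})^{-1/2}(I_d - A^{(k)})$, which immediately gives $B^{(k)}_{ii} = (\Omega^{(k)}_{ii})^{-1/2} > 0$ and $B^{(k')}_{ii} > 0$; the equation $B^{(k)}_{ii} = \beta\, B^{(k')}_{ii}$ therefore forces $\beta = +1$. Hence $B^{(k)}_i = B^{(k')}_i$, which gives $M^{(k)}_i = M^{(k')}_i$, and by Proposition \ref{prop:shifted_nodes} $Z_i$ is non-shifted. Putting both directions together proves the theorem; the whole argument hinges on two facts that are easy to miss but essential, namely that $H$ is injective on rows and that the diagonal of $B^{(k)}$ is strictly positive, which together neutralize the unresolved sign ambiguity on row $i$.
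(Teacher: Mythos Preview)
Your proof is correct and follows essentially the same approach as the paper: the paper packages the injectivity of $H$ on rows as its Lemma \ref{lemma:H_null_space} and the impossibility of $M^{(k)}_i = -M^{(k')}_i$ as its Lemma \ref{lemma:Mk_flip_sign}, but the underlying argument---reduce to Proposition \ref{prop:shifted_nodes}, then use full row rank of $H$ together with $B^{(k)}_{ii} = (\Omega^{(k)}_{ii})^{-1/2} > 0$ to kill the residual sign---is identical to yours. Your version simply inlines the two lemmas and makes the sign bookkeeping via $\beta$ a bit more explicit.
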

% \begin{proof}[Proof Sketch]
% \vspace{-7pt}
%     By Proposition \ref{prop:M}, the sufficiency is immediate since if $Z_i$ is not shifted, ${M}_i^{(k)} = {M}_i^{(k')}$ holds. Consequently, $\widetilde{M}^{(k)} = D^{(k)} M^{(k)}$ satisfies $\widetilde{M}_i^{(k)} = \pm \widetilde{M}_i^{(k')}$.
%     For the necessity, we need to prove that if $\widetilde{M}_i^{(k)} = \pm \widetilde{M}_i^{(k')}$, $Z_i$ must not be shifted. We need to rule out the possibility of a mechanism shift such that ${M}_i^{(k)} = -{M}_i^{(k')}$, which would make the shifts impossible to infer from $\widetilde{M}_i^{(k)}$. In Lemma \ref{lemma:Mk_flip_sign}, we prove that such a shift ${M}_i^{(k)} = -{M}_i^{(k')}$ is not possible. Thus, the only situation where $\widetilde{M}_i^{(k)} = \pm \widetilde{M}_i^{(k')}$ is when ${M}_i^{(k)} = {M}_i^{(k')}$. Complete proof is in Appendix \ref{app:proof_shift_node_thm}.
%     \vspace{-8pt}
% \end{proof}
We can summarize this in the following algorithm, which proves Theorem \ref{thm:identy}:
\begin{itemize}[left=.5em, itemsep=0.2em, parsep=0pt]
    \item Perform ICA to obtain $\widebar{M}^{(k)}$ and $\widebar{\epsilon}^{(k)}$ with input $X^{(k)}$.
    \item Sort by $\psi$ to get the permutation $\widebar{P}^{(k)}$ and compute $\widetilde{M}^{(k)} = \widebar{P}^{(k)} \widebar{M}^{(k)}$ and $\widetilde{\epsilon}^{(k)} = \widebar{P}^{(k)} \widebar{\epsilon}^{(k)}$.
    \item Check the condition on $\{\widetilde{M}^{(k)}_i : k \in [K]\}$ to detect if $Z_i$ is a shifted node, as prescribed by Theorem \ref{thm:shift_node}.
\end{itemize}

\subsection{Finite-sample algorithm}
\label{sec:finite_samples}

Thus far, we have considered the population setting where we are given the entire interventional distributions. In practice, we are given samples from each of these interventional distributions, so that we have $K$ datasets, one for each of the interventional distributions. 
The overall algorithm is given next in Alg. \ref{alg} (see illustration in Appendix \ref{sec:illustration}) with detailed explanations following the algorithm.
% We also provide an illustration in Appendix \ref{sec:illustration}.

\begin{algorithm}[H]
       \caption{iLCS: \textbf{I}dentifying \textbf{L}atent \textbf{C}ausal Mechanisms \textbf{S}hifts}
\label{alg}
    \begin{algorithmic}
      \Require Datasets $\{\bm{X}^{(k)}\}_{k=1}^K$ and threshold $\alpha$ (e.g., 0.5)
\State Calculate covariance matrix $\Sigma^{(k)}$ from $\bm X^{(k)}$ for all k
\State $d = \underset{k=1,\dots, K}{\max} \, \text{rank}(\Sigma^{(k)})$
\For{$k = 1,\ldots, K$}
\State\textcolor{gray}{//Step 1: $\widebar{\bm{\epsilon}}^{(k)}$ is samples from $\widebar{\epsilon}^{(k)}$}
\State $\widebar{\bm{\epsilon}}^{(k)}, \widebar{M}^{(k)} \leftarrow \text{ICA}(\bm{X}^{(k)},d)$
\State Calculate $\widehat{\psi}(\widebar{\bm\epsilon}^{(k)}) = [\widehat{\psi}(\widebar{\bm\epsilon}^{(k)}_1), \widehat{\psi}(\widebar{\bm\epsilon}^{(k)}_2), \dots, \widehat{\psi}(\widebar{\bm\epsilon}^{(k)}_d)]$
\State \textcolor{gray}{//Step 2}
\State $\text{sorted\_idx} \leftarrow \text{argsort}(\widehat{\psi}(\widebar{\bm\epsilon}^{(k)}))$
\State $\widetilde{M}^{(k)} \leftarrow \widebar{M}^{(k)}[\text{sorted\_idx}, :]$
\EndFor
%\State \textcolor{gray}{// Step 3: $S^{(k,k')}$ will contain the latent shifted nodes between environments $k$ and $k'$}
\State Initialize $S^{(k,k')} = \emptyset$, for all $k\neq k'$
\For{$i=1,\dots, d$}
   \For{$k \neq k'$}
       \State Calculate $L_i^{k,k'}$
       \State \textcolor{gray}{// Step 3}
       \If{$L_i^{k,k'} > \alpha$}
       \State $S^{(k,k')} \gets S^{(k,k')} \cup \{i\}$
       % \State \textbf{Continue}
       \EndIf
   \EndFor
\EndFor
\Ensure All latent shifted nodes $S = (S^{(k,k')})_{k,k'}$
    \end{algorithmic}
  \end{algorithm}

\paragraph{Step 1:} We perform ICA with samples from $X^{(k)}$ to extract $\widebar{M}^{(k)}$ and samples from $\widebar{\epsilon}^{(k)}$.
\begin{remark}[Estimation of $d$.]
    One missing component in using ICA in practice is that, along with samples from $\Xs{k}$, we need to input the number of latent nodes $d$, which need to be estimated from samples. Define $\Sigma^{(k)} = \mathbb{E}[X^{(k)}{X^{(k)}}^T] = G(\Bs{k})^{-1}(\Bs{k})^{-T}G^T$. Since all matrices are full rank, it follows that $d = \text{rank}(\Sigma^{(k)})$, where $\Sigma^{(k)}$ can be estimated by the sample covariance matrix. Thus, $d$ can also be estimated by the rank of the sample covariance matrix.
\end{remark}
\paragraph{Step 2:} We compute the empirical expectation of $\psi$ on samples from $\widebar{\epsilon}^{(k)}$, which by law of large number arguments, converges to its population expectation, which is $\psi(\widebar{\epsilon}^{(k)})$. We use the sorted order of the empirical expectations to sort the noise components, unscrambling the noise components as earlier, to get $\widetilde{M}^{(k)}$ and samples from $\widetilde{\epsilon}^{(k)}$.
\paragraph{Step 3:} Here, we explicitly construct a test statistic to check the condition on $\{\widetilde{M}^{(k)}_i : k \in [K]\}$ to detect if $Z_i$ is a shifted node. Note that from our Theorem \ref{thm:shift_node}, there is a non-shift node $Z_i$ between environments $k$ and $k'$ if and only if $\widetilde{M}_i^{(k)} = \pm \widetilde{M}_i^{(k')}$. Accordingly, we define a test statistic:
\begin{align*}
    L_i^{k,k'} = \frac{\min\{\|\widetilde{M}^{(k)}_i \pm \widetilde{M}^{(k')}_i\|_1}{\|\widetilde{M}^{(k)}_i\|_1 + \|\widetilde{M}^{(k')}_i\|_1}
\end{align*}
% \begin{align*}
%     L_i^{k,k'} = \frac{\min\{\|\widetilde{M}^{(k)}_i - \widetilde{M}^{(k')}_i\|_1, \|\widetilde{M}^{(k)}_i + \widetilde{M}^{(k')}_i\|_1\}}{\|\widetilde{M}^{(k)}_i\|_1 + \|\widetilde{M}^{(k')}_i\|_1}
% \end{align*}

It can be seen that $L_i^{k,k'} = 0$ if and only if $\widetilde{M}_i^{(k)} = \pm \widetilde{M}_i^{(k')}$, which implies node $Z_i$ is not shifted between environments $k$ and $k'$. Thus, in step three of the algorithm above, for each coordinate $i \in [d]$, we check if there exists $k \neq k'$ such that $L_i^{k,k'} > \alpha$ for a given threshold $\alpha$. If such a $k \neq k'$ exists, we include $i$ in the list of shifted nodes. 

Algorithm \ref{alg} is consistent with the ground truth set of shifted nodes as $n$ approaches infinity. Empirical evidence supporting this claim is presented in Figure \ref{fig:consistency}, which shows that with a sufficiently large sample size, all shifted nodes are correctly identified, and the F1 score reaches 1. %\textcolor{blue}{
Further theoretical discussion on the sample complexity of our method can be found in Appendix \ref{app:sample_complex}.
%}

% \begin{algorithm}[H]
%    \caption{Identifying latent causal mechanisms shifts}
%    \label{alg}
% \begin{algorithmic}
%    \Require Datasets $\{\bm{X}^{(k)}\}_{k=1}^K$ and threshold $\alpha$ (e.g., 0.5)
%    \State $d = \underset{k=1,\dots, K}{\max} \, \text{rank}(\bm{X}^{(k)})$
%    \For{$k = 1,\ldots, K$}
%    \State $\bm{\bar{\epsilon}}^{(k)}, \bar{M}^{(k)} \leftarrow \text{ICA}(\bm{X}^{(k)},d)$\COMMENT{\quad\textcolor{gray}{//Step 1: $\bm{\bar{\epsilon}}^{(k)}$ is samples from $\bar{\epsilon}^{(k)}$}}
%    \State Calculate $\widehat{\psi}(\bm{\bar\epsilon}^{(k)}) = [\widehat{\psi}(\bm{\bar\epsilon}^{(k)}_1), \widehat{\psi}(\bm{\bar\epsilon}^{(k)}_2), \dots, \widehat{\psi}(\bm{\bar\epsilon}^{(k)}_d)]$
%    \State $\text{sorted\_idx} \leftarrow \text{argsort}(\widehat{\psi}(\bm{\bar\epsilon}^{(k)}))$\COMMENT{ \quad\textcolor{gray}{//Step 2}}
%    \State $\widetilde{M}^{(k)} \leftarrow \bar{M}^{(k)}[\text{sorted\_idx}, :]$
%    \EndFor
%    \State \textcolor{gray}{// Step 3: $S^{(k,k')}$ will contain the latent shifted nodes between environments $k$ and $k'$}
%    \State Initialize $S^{(k,k')} = \emptyset$, for all $k\neq k'$
%    \For{$i=1,\dots, d$}
%        \For{$k \neq k'$}
%            \State Calculate $L_i^{k,k'}$
%            \If{$L_i^{k,k'} > \alpha$}\COMMENT{\quad \textcolor{gray}{// Step 3}}
%            \State $S^{(k,k')} \gets S^{(k,k')} \cup \{i\}$
%            \State \textbf{Continue}
%            \EndIf
%        \EndFor
%    \EndFor
%    \Ensure All latent shifted nodes $S = (S^{(k,k')})_{k,k'}$
% \end{algorithmic}
% \end{algorithm}

\begin{figure}[!ht]
    \centering
    \includegraphics[width=0.85\textwidth]{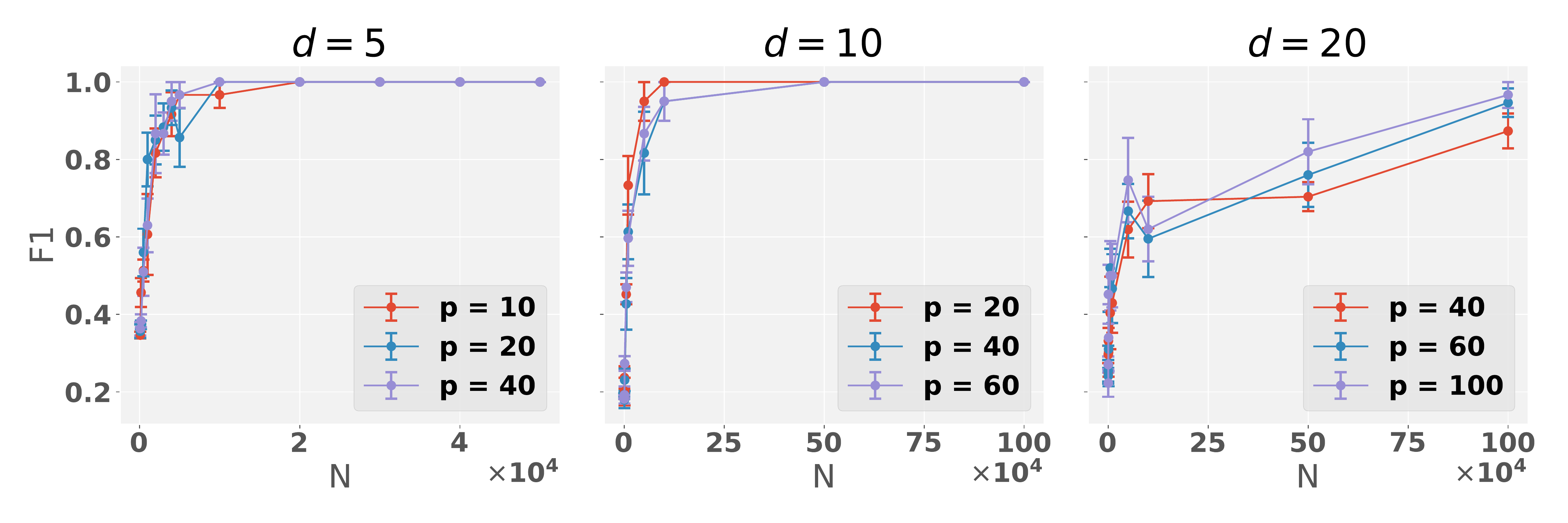}
    \caption{Illustration of the efficacy of our method in accurately identifying latent shifted nodes as the sample size increases, for ER2 graphs. In the first subplot, for a latent graph with $d = 5$ nodes, we examine scenarios with observed dimensions $p = 10, 20, 40$ and plot their corresponding F1 scores against the number of samples $n$. It is observed that the F1 score approaches 1 with a sufficiently large sample size. 
    Detailed experimental procedures and results are discussed in Section \ref{sec:experiment}.
    }
    \label{fig:consistency}
\end{figure}

\section{Experiments}
\label{sec:experiment}

In this section, we investigate the performance of our method in synthetic and real-world data.

\subsection{Synthetic Data}\label{sec:synthetic}
In our setup, each noise component \(\epsilon_i\) is sampled from a generalized normal distribution with the probability density function given by \(p(\epsilon_i) \propto \exp\{-|\epsilon_i|^i\}\), where \(i = 1, 2, \dots, d\). In this noise generation process, the noise vector \(\epsilon\) adheres to the condition \(\psi({\epsilon}_i) < \psi({\epsilon}_j)\) for all \(i < j\) if we choose $\psi(y)=\mathbb{P}(|y|\le 1)$. Following the methodology similar to that in \cite{seigal2022linear}, we start by sampling either an Erdős-Rényi (ER) or Scale-Free (SF) graph with \(d\) nodes and an expected edge count of \(md\), where \(m \in \{2, 4, 6\}\), denoted as \(ERm\) or \(SFm\). The observed space dimension \(p\) is set to \(2d\). For each graph, the weights are independently sampled from \(\text{Unif}\pm[0.25, 1]\) and the diagonal entries of \(\Omega\) from \(\text{Unif}[2, 4]\). In each environment \(k\),  \(15\%\) of the nodes are randomly selected for shifting. The new weights \(A^{(k)}_i\) for the shifted node \(i\), and the new entries of \(\Omega^{(k)}\), specifically \(\Omega^{(k)}_{ii}\), are independently sampled from \(\text{Unif}[6, 8]\). The mixing function \(G\) is independently generated from \(\text{Unif}[-0.25, 0.25]\). 

Empirically, we have observed that the following formulation of \(L_i^{k,k'}\) leads to improved results:
\begin{align*}
    L_i^{k,k'} = \frac{\| |\widetilde{M}^{(k)}_i| - |\widetilde{M}^{(k')}_i| \|_1}{\| \widetilde{M}^{(k)}_i \|_1 + \| \widetilde{M}^{(k')}_i \|_1},
\end{align*}
where \( |\widetilde{M}^{(k)}_i| \) denotes the element-wise absolute value of the vector \( \widetilde{M}^{(k)}_i \). We will utilized the new formula of $L_{i}^{k,k'}$ to detect shifts in the following experiment. Then we explore sample sizes \(n\) from \(500\) to \(10^6\), using the observed samples \({X}^{(k)}\) as input. The parameter \(\alpha\) is set to \(0.2\) for \(d \leq 10\) and \(0.5\) for higher dimensions, reflecting the increased complexity in estimating larger dimensional latent graphs and thus necessitating a higher tolerance for $L_1$ norm differences in detecting shifted nodes. For each setting, we independently generate $10$ datasets and take the average of the metrics. The results for \(n = 10^6\) are shown in Table \ref{table:graph_result}, and the asymptotic consistency results for specific \(p\) values are illustrated in Figure \ref{fig:consistency}. %\textcolor{blue}{
In addition to the causal representative setting, our method can also directly identify mechanism shifts in a fully observed setting, where $G=I$. We further compare our method's results in this fully observed setting against the baseline DCI \cite{wang2018direct}, which addresses direct mechanism shifts in linear settings. The results of this comparison are provided in Appendix \ref{app:dci}, demonstrating that our method outperforms DCI in most settings.
%}

% \begin{figure}[ht]
% \centering
% \includegraphics[width=\columnwidth]{consistency_all.pdf}
% \caption{This plot illustrates the efficacy of our method in accurately identifying latent shifted nodes as the sample size increases, for ER2 graphs. In the first subplot, for a latent graph with $d = 5$ nodes, we examine scenarios with observed dimensions $p = 10, 20, 40$ and plot their corresponding F1 scores against the number of samples $n$. It is observed that the F1 score approaches 1 with a sufficiently large sample size. The remaining two subplots demonstrate similar trends. Detailed experimental procedures and results are discussed in Section \ref{sec:experiment}.}
% \label{fig:consistency}
% \end{figure}

\begin{table}[t]
\caption{Performance metrics for shifted node detection across various graph configurations, sample sizes $n=10^6$.}
\label{table:graph_result}
\vskip 0.15in
\begin{center}
% \begin{small}
\resizebox{0.8\textwidth}{!}{\begin{tabularx}{\textwidth}{XXXXXXr}
\toprule
Graph Type & $p$ & $d$ & Precision & Recall & F1 Score & Time (s) \\
\midrule
ER2 & 10 & 5 & 1.000 & 1.000 & 1.000 & 1.23 \\
 & 20 & 10 & 1.000 & 1.000 & 1.000 & 3.84 \\
 & 40 & 20 & 0.933 & 0.833 & 0.873 & 10.34 \\
 & 60 & 30 & 0.680 & 0.700 & 0.689 & 20.06 \\
 & 80 & 40 & 0.610 & 0.600 & 0.605 & 30.59 \\
 %& 100 & 50 & 0.496 & 0.500 & 0.498 & 38.29 \\
 \midrule
ER4 & 20 & 10 & 1.000 & 1.000 & 1.000 & 3.89 \\
 & 40 & 20 & 0.933 & 0.933 & 0.933 & 9.39 \\
 & 60 & 30 & 0.617 & 0.600 & 0.607 & 30.83 \\
 & 80 & 40 & 0.610 & 0.617 & 0.613 & 32.08 \\
 %& 100 & 50 & 0.600 & 0.614 & 0.607 & 43.69 \\
%  \midrule
% ER6 & 40 & 20 & 0.967 & 0.967 & 0.967 & 12.49 \\
%  & 60 & 30 & 0.725 & 0.725 & 0.725 & 29.32 \\
%  & 80 & 40 & 0.612 & 0.617 & 0.614 & 32.61 \\
 %& 100 & 50 & 0.414 & 0.414 & 0.414 & 40.68 \\
 \midrule
SF2 & 10 & 5 & 0.900 & 0.900 & 0.900 & 1.64 \\
 & 20 & 10 & 1.000 & 1.000 & 1.000 & 3.84 \\
 & 40 & 20 & 0.807 & 0.833 & 0.817 & 15.85 \\
 & 60 & 30 & 0.730 & 0.750 & 0.739 & 22.12 \\
 & 80 & 40 & 0.667 & 0.667 & 0.667 & 30.29 \\
 %& 100 & 50 & 0.560 & 0.543 & 0.551 & 41.37 \\
\midrule
SF4  & 20 & 10 & 1.000 & 1.000 & 1.000 & 3.13 \\
 & 40 & 20 & 0.967 & 0.900 & 0.927 & 15.12 \\
 & 60 & 30 & 0.725 & 0.700 & 0.711 & 29.79 \\
 & 80 & 40 & 0.539 & 0.533 & 0.535 & 30.84 \\
 %& 100 & 50 & 0.457 & 0.457 & 0.457 & 37.16 \\
% \midrule
% SF6 & 40 & 20 & 0.900 & 0.867 & 0.880 & 14.38 \\
%  & 60 & 30 & 0.797 & 0.800 & 0.796 & 22.83 \\
%  & 80 & 40 & 0.500 & 0.500 & 0.500 & 32.97 \\
 %& 100 & 50 & 0.480 & 0.486 & 0.483 & 40.92 \\
\bottomrule
\end{tabularx}}
% \end{small}
\end{center}
\vskip -0.1in
\end{table}

% \begin{wrapfigure}{r}{0.7\textwidth}
%     \centering
%     \vspace{-10pt}
%     \includegraphics[width=0.7\textwidth]{NeurIPS/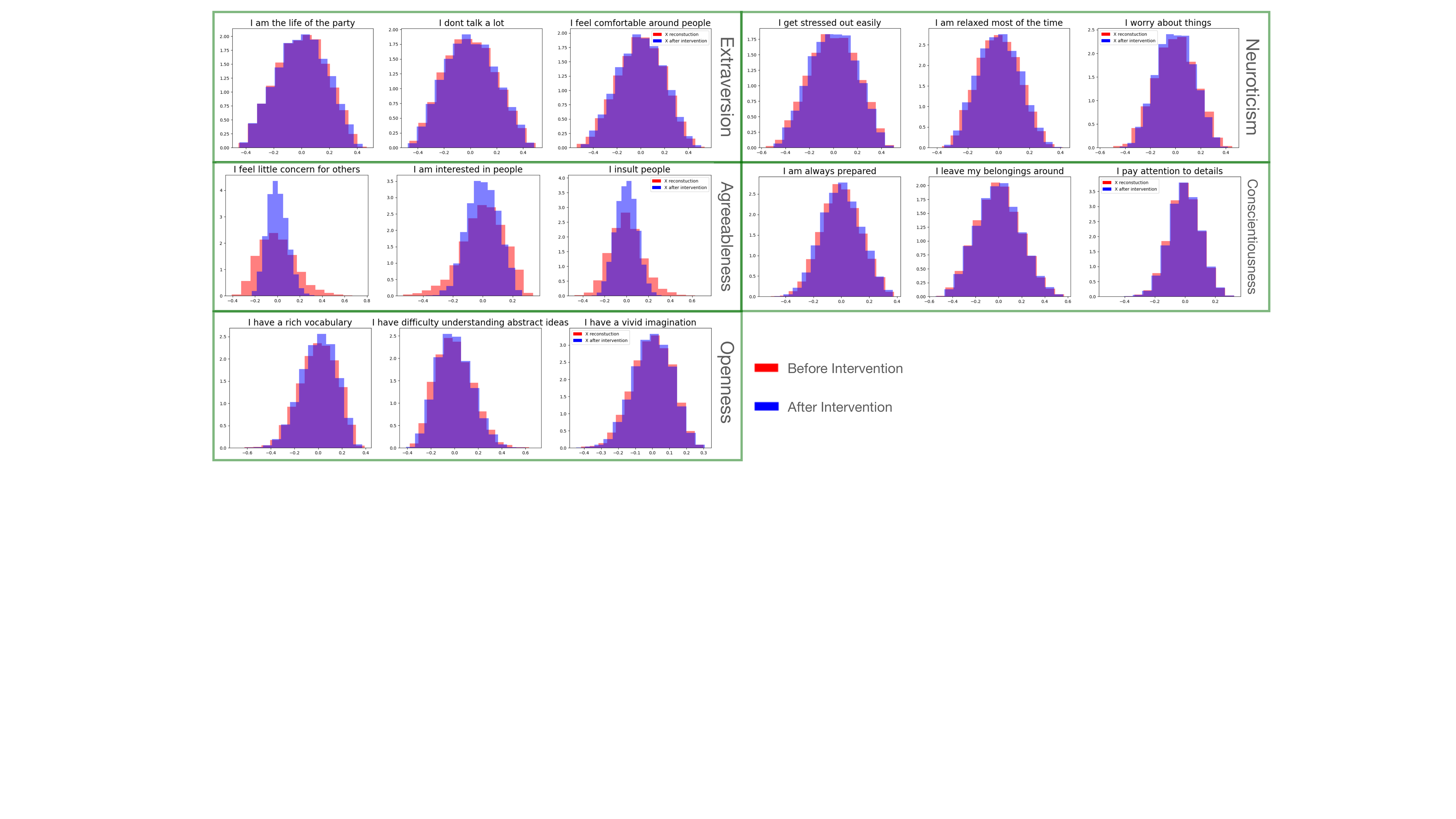}
%     \caption{We apply an intervention to the first column of \(\bm{\epsilon}\) and then use \((\widehat{M}^{male})^\dagger\) for remixing. The first row of the resulting histograms represents scores for 5 out of the 10 questions related to the Extraversion personality dimension. Subsequent rows display histograms for 5 questions from each of the other four personality dimensions, as indicated at the right end of each row. The red distribution represents the scores before the intervention on the noise, while the blue distribution corresponds to scores after the intervention. Overlapping areas are shown in purple. Notably, the intervention on the first column of \(\bm{\epsilon}\) alters the distribution in the observed space, specifically affecting the scores for questions related to the \textit{Agreeableness} personality dimension, whereas distributions for other dimensions remain unchanged. Consequently, we can label the first noise component as corresponding to \textit{Agreeableness}.
%     }
%     \label{fig:ffm_label}
%     \vspace{-10pt}
% \end{wrapfigure}

\subsection{Psychometrics Data}\label{sec:psycho}
We evaluate our method using a dataset related to the Five Factor Model, also known as the Big Five personality traits \cite{goldberg2013alternative, goldberg1992development, matthews2003personality}. This model is a widely accepted framework, comprising five broad dimensions that encapsulate the diversity of human personality traits. These dimensions are \textit{Openness to Experience, Conscientiousness, Extraversion, Agreeableness}, and \textit{Neuroticism}.

The dataset utilized in our study was gathered through an interactive online personality test available on OpenPsychometrics.org, a nonprofit endeavor aimed at educating the public about psychology while collecting data for psychological research\footnote{The data can be downloaded via the link: \url{https://www.kaggle.com/datasets/lucasgreenwell/ocean-five-factor-personality-test-responses/data}}. This dataset encompasses responses to 50 questions, with 10 questions dedicated to each of the five personality dimensions. Participants responded to each question on a scale from 1 to 5. Additionally, the dataset includes demographic information, such as race, age, gender, and country, comprising a total of 19,719 observations. 
% \FL{(1) let's add a cit to make sure it's clear this is not our data set, right now it's ambiguous (2) I would add the link as a footnote, to make sure it is not missed by a reviewer if they print the paper. In case it is, I think it would be good to add that the question formalization is novel (not part of the competition)}

\paragraph{Question formalization and data processing.} In this study, we hypothesize the existence of $5$ latent nodes, each representing one of the five personality dimensions, believed to be causally related. The score responses to the 50 questions form our observed space. Our main goal is to determine whether variations in personality dimensions can be observed across genders, thus treating gender as one environment (\(K=2\)). Additionally, we investigate potential personality shifts across countries, selecting the US and UK for analysis due to they have the most observations in our dataset. The only preprocessing step undertaken involves the removal of observations with missing values and the normalization of data to fit within the $[0,1]$ range, achieved by adjusting according to the maximum and minimum values observed. The research question we have formalized in this study is not derived from any data competition. It aligns with interests explored in existing psychological literature \cite{kajonius2017personality, chapman2007gender, soto2011age, lockenhoff2014gender}, yet our investigation is distinguished by a unique analytical framework.

\paragraph{Labeling latent nodes.} Prior to detecting shifted nodes, it is essential to assign semantics to each node. This process involves conducting interventions on each component of the noise vector to aid in labeling the latent nodes. Given that the noise components are distinct for each latent node, labeling the noise effectively equates to labeling the latent nodes.

\looseness=-1Initially, we apply ICA to the data for males, followed by getting post-sorting \(\widetilde{M}^{male}\) and \(\bm{\widetilde{\epsilon}}^{male}\) as outlined in our methodology. Subsequently, we perform interventions on each noise component, setting each to \(0\) sequentially, and then re-mixing the intervened noise vector using \((\widetilde{M}^{male})^\dagger\). By examining the impact of these interventions on the observation space — specifically, identifying which question scores undergo significant changes — we can assign appropriate semantic labels to each latent node index. For instance, nullifying the first column of \(\bm{\widetilde{\epsilon}}\) and remixing the intervened noise with \((\widetilde{M}^{male})^\dagger\) alters the score distribution in a manner that reveals the semantic domain affected by the first noise component. An example of assigning the label \textit{Agreeableness} to a latent node is depicted in Figure \ref{fig:ffm_label}. By applying the same process to all noise components, we are able to assign semantic labels \textit{Openness, Conscientiousness, Extraversion}, and \textit{Neuroticism} to the remaining latent nodes. More detailed experiment results are shown in Section \ref{sec:supp_experiment}.

\begin{figure}[ht]
\begin{center}
\centerline{\includegraphics[width=0.90\columnwidth]{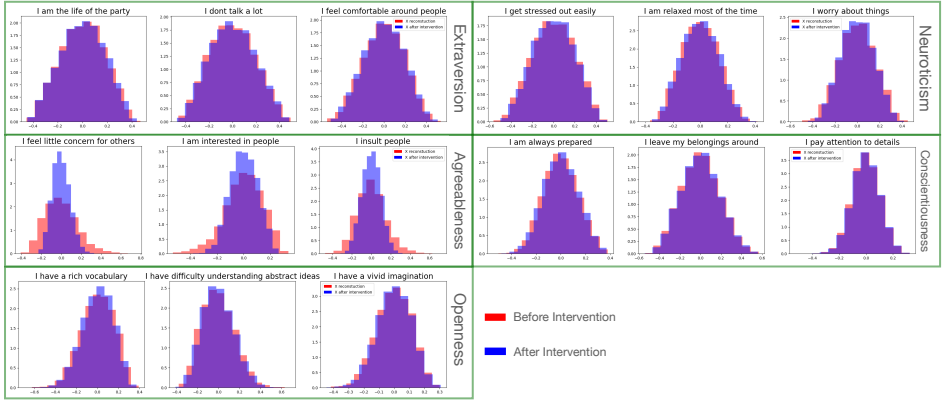}}
\caption{We apply an intervention to the first column of \(\bm{\epsilon}\) and then use \((\widehat{M}^{male})^\dagger\) for remixing. The first row of the resulting histograms represents scores for 5 out of the 10 questions related to the Extraversion personality dimension. Subsequent rows display histograms for 5 questions from each of the other four personality dimensions, as indicated at the right end of each row. The red distribution represents the scores before the intervention on the noise, while the blue distribution corresponds to scores after the intervention. Overlapping areas are shown in purple. Notably, the intervention on the first column of \(\bm{\epsilon}\) alters the distribution in the observed space, specifically affecting the scores for questions related to the \textit{Agreeableness} personality dimension, whereas distributions for other dimensions remain unchanged. Consequently, we can label the first noise component as corresponding to \textit{Agreeableness}.}
\vspace{-20pt}
\label{fig:ffm_label}
\end{center}
\end{figure}

\paragraph{Shifted nodes detection.}
To identify shifted personality dimensions across gender, we computed \(L_i^{male,female}\) for each latent node, obtaining values of \(\{0.074, 0.0497, 0.078, 0.638, 0.633\}\). Setting a tolerance threshold \(\alpha=0.5\) to accommodate real data estimation variances, we observed that the last two nodes exhibit significantly higher \(L_i^{male,female}\) scores, surpassing \(\alpha\), and thus are considered shifted. These nodes correspond to the labels \textit{Neuroticism} and \textit{Extraversion}. Consistent with existing psychological literature, women have been found to score higher in \textit{Neuroticism} than men \cite{kajonius2017personality,chapman2007gender,soto2011age,lockenhoff2014gender}, while men scored higher in the Activity subcomponent of \textit{Extraversion} \cite{chapman2007gender}. This discovery aligns with the findings in psychology literature. To further validate our method's effectiveness, a similar analysis was conducted across countries, comparing the UK and the US, which have the most observations in our dataset. The computed \(L_i^{US,UK}\) for each latent node was \(\{0.302, 0.258, 0.109, 0.189, 0.088\}\). All values fell below \(\alpha\), indicating no latent node shifts between these two countries. This finding is also in agreement with existing studies that personality exhibits stability across countries and cultures \cite{kajonius2017personality,jolijn2003five,cohen2014structural}.

\section{Concluding Remarks}

In this study, we demonstrated that latent mechanism shifts are identifiable, up to a permutation, within the framework of linear latent causal structures and linear mixing functions. Furthermore, we introduced an algorithm, grounded in ICA, designed to detect these shifts. Our method offers a broader applicability to various types of interventions compared to CRL framework. Unlike shift detection methods where node variables are directly observable, our approach extends to scenarios where latent variables remain unobserved. 
A promising future direction consists of adapting our methodology to nonlinear transformations, which could address more complex, practical challenges, such as identifying latent mechanism shifts in real-world image data. 
% Secondly, the development of techniques for assigning semantic labels to latent variables will enhance the applicability of CRL methods in real-world applications.

% Acknowledgements should only appear in the accepted version.
% \section*{Acknowledgements}

\bibliographystyle{agsm}
\bibliography{ref}

\appendix
\newpage
\onecolumn
\apptitle{\papertitle}

\section{Limitations and Broader Impacts}
\label{sec:limit}
Limitations of this work include the need to relax the noise assumption and to consider similar settings under nonlinear mixing functions. These are promising directions to explore in the CRL field.  The broader impact of this work is that CRL methods can be used to identify mechanism shifts and determine root causes, which can be utilized in the biological field to find disease genes or biomarkers. Currently, the negative impacts of this method are not clear.

\section{Illustration of our algorithm}
\label{sec:illustration}

\begin{figure}[!ht]
\centering
\includegraphics[width=0.85\textwidth]{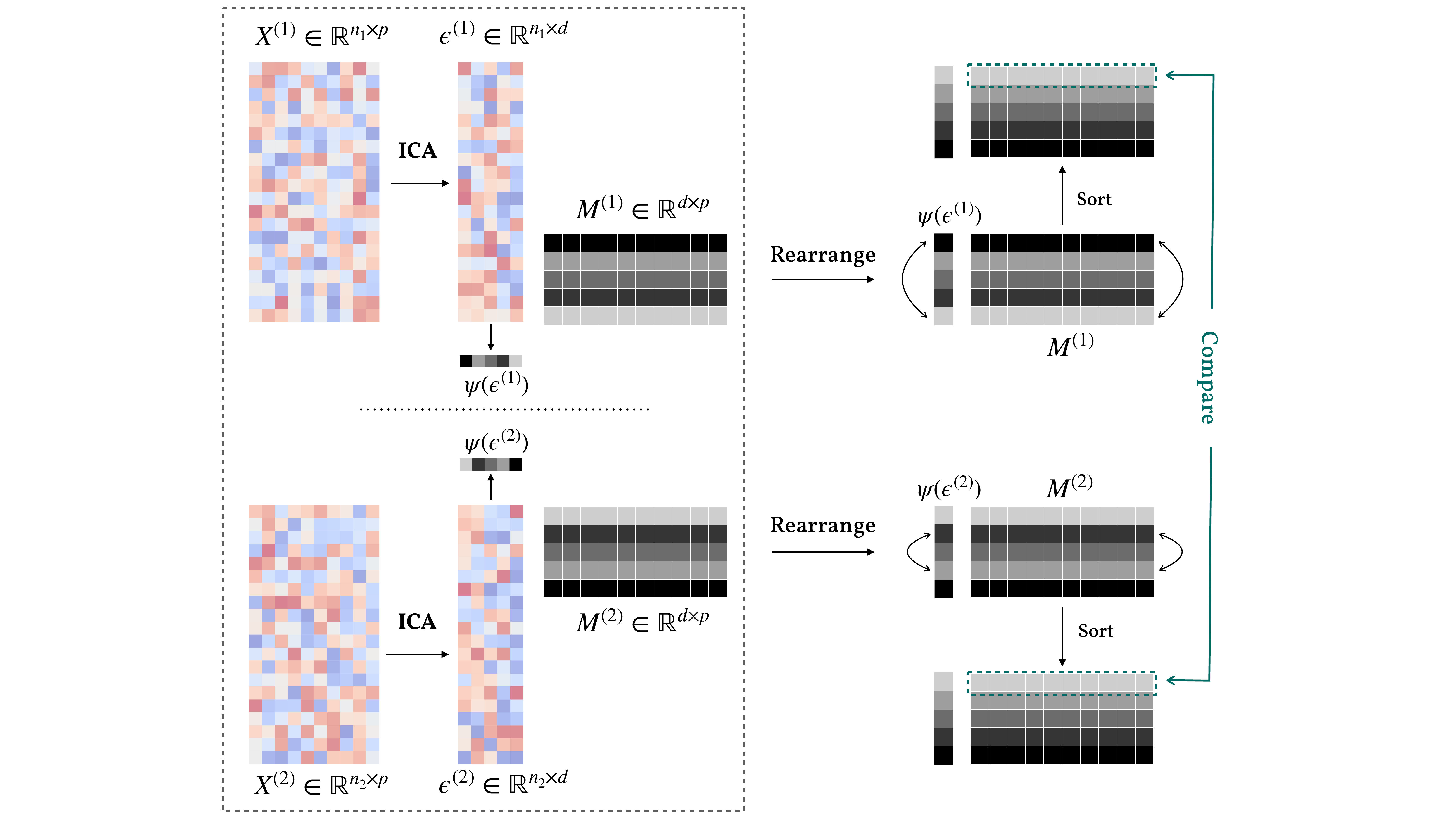}
\caption{
Overview of our method: For each context $k$, given the data $\bm{X}^{(k)}$, our method involves three main steps. First, we apply ICA to each dataset to estimate $\bm{\epsilon}^{(k)}$ and $M^{(k)}$. Second, we calculate $\psi(\epsilon^{(k)})=\{\psi(\epsilon^{(k)}_1),\psi(\epsilon^{(k)}_2),\dots,\psi(\epsilon^{(k)}_d)\}$ for each noise component, sort these components in increasing order, and correspondingly arrange the rows of $M^{(k)}$. Third, we compare the sorted rows of $M^{(k)}$ to identify the shifted nodes.
}
\label{fig:algo_illustration}
\end{figure}

\section{Discussion on Test Function}
\label{sec:discuss_psi}

In Assumption \ref{assump:test_function}, we assume that there exists a test function $\psi$ and that we can access it. Here we discuss ways to relax it. Recall that in Section \ref{sec:identifiability}, $\psi$ is utilized to sort the noise component $\widebar{\epsilon}^{(k)}$ to ensure that the post-sorting noise vector $\widetilde{\epsilon}^{(k)}$ has a consistent order across all environments. 

An alternative approach to achieve this is to use distribution matching. We take the noise vector in the first environment as a reference and align all other noise vectors post-sorting with the reference vector. To do this, we can use a distribution distance metric $D$. First, define a signed permutation space as 
$$S_d = \{S = PD \mid P \text{ is a permutation matrix}, D \text{ is a diagonal matrix with } D_{ii} \in \{-1, 1\}\}$$ 
Then, solve the optimization problem:
\begin{align*}
    \min_{S \in S_d} &D(\widebar{\epsilon}^{(1)}, S\widebar{\epsilon}^{(k)})
\end{align*}
where $D$ can be any distribution distance, such as Kullback-Leibler divergence. In Assumption \ref{assump:noise}, we assume pair-wise different noise component, thus the optimization questions have minimums value 0 if and only if each noise component of $\widebar{\epsilon}^{(1)}$ and $S\widebar{\epsilon}^{(k)}$ have the same distribution, thus help us align the noise component order. We solve this optimization problem for each environment $k \ge 2$, thus obtaining $\widebar{P}^{(k)}$. All following steps in our algorithm remain the same when using this alternative approach. 

One small gap remains: even though all post-sorting noise vectors have a consistent order with $\widebar{\epsilon}^{(1)}$, $\widebar{\epsilon}^{(1)}$ is not the ground truth order of $\epsilon^{(1)}$. This ambiguity cannot be eliminated, consistent with the nature of the CRL method, and is the same with other CRL methods, such as \cite{seigal2022linear,li2023nonlinear}. Fortunately, the ground truth order is not so important in practice. What people mainly care about is the semantic label for each latent node. Some CRL generative models, such as \cite{yang2021causalvae}, may be helpful for performing fake interventions and manually assigning semantic labels. However, this is beyond the scope of this paper, and we will not discuss it further.

Even though the distribution matching optimization method offers greater flexibility, it is computationally expensive. First, note that the cardinality $|S_d| = d! \times 2^d$, which represents a vast search space when $d$ is large. Furthermore, calculating $D(\cdot,\cdot)$ is generally computationally intensive. For example, the KL method requires density estimation, and the Maximum Mean Discrepancy (MMD) method necessitates the computation of pairwise distances among samples. These challenges render this alternative difficult to implement. Consequently, we opt to use the $\psi$ function to facilitate efficient sorting, but it may need carefully design.

\section{Discussion on Sample Complexity}
\label{app:sample_complex}
The sample complexity of our method must be considered from two perspectives: one involves using ICA to estimate $\bar{\epsilon}^{(k)}$ and $\widebar{M}^{(k)}$, and the other pertains to utilizing $\bar{\epsilon}^{(k)}$ and a test function to sort the rows of $\widebar{M}^{(k)}$. Since the sorting step depends on the choice of test function, we assume for simplicity that $\widebar{M}^{(k)}$ is already sorted by the ground truth order. Thus, we only focus on the asymptotic behavior of $\widetilde{M}^{(k)}$, which closely relates to the properties of the ICA estimator.

There are various algorithms for solving ICA \cite{hyvarinen2009independent,hyvarinen1999fast,shen2009fast}; each algorithm exhibits different asymptotic statistical properties. If we apply the findings in \citet{auddy2023large}, we assume that the estimated ICA unmixing function has the following statistical accuracy:

\begin{theorem}
    If the sample size $n \ge g(d,\delta)$, then with probability at least $1-h(n,d,\delta,\epsilon)$, we have:
    $$l(\widetilde{M}^{(k)}_i - M_i^{(k)}) \le C \cdot p(d,n) f(\delta),$$
    where $\widetilde{M}^{(k)}_i$ represents the $i$-th row of the estimated unmixing function $M^{(k)}$, $C$ is a constant, and $p$, $f$, $g$, and $h$ are known functions. For instance, in \citet{auddy2023large}, $p(d,n) = \sqrt{d/n}$ and $f(\delta) = \sqrt{\log(1/\delta)}$. Here, $l$ denotes the loss function, and the $L_2$ norm can serve as an option.
\end{theorem}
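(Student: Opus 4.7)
The plan is to invoke an off-the-shelf row-wise concentration inequality for the ICA unmixing-matrix estimator and specialize it to the present setup. Because the theorem explicitly idealizes the sorting step, treating $\widetilde{M}^{(k)}$ as already aligned with the true permutation (and, implicitly, with a fixed sign convention), the gap between $\widetilde{M}^{(k)}_i$ and $M^{(k)}_i$ is driven entirely by the statistical error of the underlying ICA procedure. The task therefore reduces to quoting and specializing a finite-sample guarantee for that procedure, so the ``proof'' is really a verification-then-apply argument rather than a new concentration analysis.

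First I would verify that our model falls within the regime covered by \citet{auddy2023large}. The required conditions are: (i) the observation admits the linear ICA representation $\Xs{k} = G(\Bs{k})^{-1}\epss{k}$ with the composite mixing $G(\Bs{k})^{-1}$ of full column rank, which holds because $G$ is assumed full column rank and $\Bs{k}=(\Os{k})^{-\nicefrac{1}{2}}(I_d - \As{k})$ is invertible; (ii) the components of $\epss{k}$ are mutually independent, mean zero, unit variance, with at most one Gaussian component, which is precisely Assumption~\ref{assump:noise}; and (iii) the higher-moment or sub-Gaussian tail conditions on each $\epss{k}_i$ required by the concentration machinery (which would have to be added as a mild supplementary condition, since Assumption~\ref{assump:noise} itself does not impose tail control). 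Under these conditions, the cited result delivers a row-wise bound of the form $l\bigl(\widetilde{M}^{(k)}_i - M^{(k)}_i\bigr) \le C\cdot p(d,n)\, f(\delta)$ with probability at least $1-h(n,d,\delta,\epsilon)$ whenever $n \ge g(d,\delta)$.

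Second I would instantiate the abstract functions. Choosing $l = \NormII{\cdot}$ matches the loss bounded in \cite{auddy2023large}; plugging in their explicit rates yields $p(d,n) = \sqrt{d/n}$ and $f(\delta) = \sqrt{\log(1/\delta)}$, together with the explicit burn-in $g(d,\delta)$ and failure-probability bookkeeping $h(n,d,\delta,\epsilon)$ from that paper, which reproduces the stated bound.

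The main obstacle I expect is \emph{not} the concentration step itself, which is essentially imported as a black box, but rather lining up the cited guarantee with a per-row $\ell_2$ bound: many classical ICA analyses control the full unmixing matrix in spectral or Frobenius norm, and translating such bounds to row-wise bounds can cost an extra $\sqrt{d}$ factor that has to be absorbed into either the constant $C$ or the rate $p(d,n)$. A secondary subtlety is that in practice the sorting permutation $\widebar{P}^{(k)}$ and the sign pattern $\Ds{k}$ are themselves estimated and may misalign rows at small sample sizes; a fully end-to-end statement would require bounding the probability of a sorting or sign error (via a gap condition on $\{\psi(\epss{k}_i)\}_{i=1}^d$) and combining it with the ICA concentration through a union bound, but the theorem as stated side-steps this by idealizing the alignment step.
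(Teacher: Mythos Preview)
Your proposal is correct and, in fact, more thorough than what the paper itself does: the paper does not prove this theorem at all but simply \emph{states} it as an imported black-box guarantee from \citet{auddy2023large}, prefacing it with ``we assume that the estimated ICA unmixing function has the following statistical accuracy.'' Your verify-then-apply reading is exactly the intended interpretation, and your additional remarks on row-wise versus matrix-norm bounds and on the idealized sorting step are useful caveats that the paper leaves implicit.
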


Under this theorem, for two environments $k$ and $k'$, if node $i$ does not shift, we have:

$$
||\widetilde{M}_i^k - \widetilde{M}_i^{k'}||_2 \le ||\widetilde{M}_i^k - M_i||_2 + ||\widetilde{M}_i^{k'} - M_i||_2 \le 2 \cdot C \cdot p(d,n) f(\delta)
$$

with a probability of at least $1 - 2h(n,d,\delta,\epsilon)$. Thus, by setting the threshold $\alpha$ as $2 \cdot C \cdot p(d,n) f(\delta)$, we can control the false discovery rate to be at most $2h(n,d,\delta,\epsilon)$. A similar sample complexity theorem can be extended to cases involving more than two environments, as long as the statistical properties of the ICA solution are known.

\section{Detailed Proofs}
\label{app:all_proofs}

\subsection{Proof of Proposition \ref{prop:shifted_nodes}}
\label{app:proof_of_prop_shift_nodes}

\begin{lemma}
\label{lemma:H_null_space}
    Under problem setting, for any $x, y \in \mathbb{R}^{d \times 1}$, the equation $x^TH = y^TH$ holds if and only if $x = y$.
\end{lemma}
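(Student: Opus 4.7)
The plan is to reduce the lemma to the observation that $H = G^{\dagger}$ has a trivial left null space, which is a direct consequence of $G$ being full column rank. First I would record why $G$ must have rank $d$ under the problem setting: we have $p \ge d$ (otherwise the ICA step of Section \ref{sec:main_alg} would not apply), and the identity $\Sigma^{(k)} = G(B^{(k)})^{-1}(B^{(k)})^{-T} G^T$ together with $B^{(k)}$ being invertible forces $\mathrm{rank}(G) = \mathrm{rank}(\Sigma^{(k)}) = d$. This is exactly the rank estimate used in the remark right after Algorithm \ref{alg}, so the assumption is consistent with what the paper already states.

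Given full column rank, the Moore--Penrose pseudoinverse $H = G^{\dagger} = (G^T G)^{-1} G^T \in \mathbb{R}^{d \times p}$ satisfies the left-inverse identity $H G = I_d$. The lemma then follows from a one-line algebraic manipulation: if $x^T H = y^T H$, then right-multiplying both sides by $G$ yields
\[
x^T = x^T H G = y^T H G = y^T,
\]
so $x = y$. The converse implication is immediate from substitution.

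I do not expect any real obstacle here; the only subtlety is clarifying that the lemma is really a statement about $H$ having full row rank (equivalently, being right-invertible by $G$), and confirming that the problem setting indeed guarantees $\mathrm{rank}(G) = d$. Once that is in place, the proof is a two-line argument using $HG = I_d$.
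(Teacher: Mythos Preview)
Your proposal is correct and essentially matches the paper's proof: both hinge on $G$ having full column rank so that $H = G^\dagger$ has full row rank. The paper phrases the final step as ``the null space of $H^T$ is $\{0\}$, so $H^T(x-y)=0$ forces $x=y$,'' whereas you right-multiply by $G$ and use $HG = I_d$; these are the same argument in slightly different clothing.
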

\begin{proof}
    Given that $G$ possesses full column rank, it follows that $H = G^{\dagger}$ has full row rank. Consequently, the null space of $H^T$ is $\{0\}$. Therefore, if $x^TH = y^TH$, it implies $H^T(x - y) = 0$. This leads to the conclusion that $x - y = 0$, which in turn implies $x = y$.
\end{proof}

\begin{proof}[Proof of Proposition \ref{prop:shifted_nodes}]
    Recall that \(B^{(k)} = (\Omega^{k})^{-\frac{1}{2}}(I_d - A^{(k)})\). Since \(A^{(k)}\) is a weighted adjacency matrix, its diagonal entries are zero. Thus,
    \begin{align*}
        B^{(k)}_{ij} &= -\left(\Omega^{(k)}_{ii}\right)^{-\frac{1}{2}}A^{(k)}_{ij} \quad\text{if}\quad i \neq j, \\
        B^{(k)}_{ii} &= \left(\Omega^{(k)}_{ii}\right)^{-\frac{1}{2}} \quad\text{if}\quad i = j.
    \end{align*}
    Under Definition \ref{def:node_shifted}, if node \(Z_i\) is shifted, it implies either 1) \(\Omega^{(k)}_{ii} \neq \Omega^{(k')}_{ii}\), 2) \(A^{(k)}_i \neq A^{(k')}_i\), or 3) both conditions hold. In scenarios 1) and 3), \(B^{(k)}_{ii} \neq B^{(k')}_{ii}\), resulting in \(B^{(k)}_i \neq B^{(k')}_i\). In scenario 2), while \(\Omega^{(k)}_{ii} = \Omega^{(k')}_{ii}\), there exists a \(j \in [d]\) such that \(A^{(k)}_{ij} \neq A^{(k')}_{ij}\), leading to \(B^{(k)}_i \neq B^{(k')}_i\). If node \(Z_i\) is not shifted, then \(A^{(k)}_i = A^{(k')}_i\) and \(\Omega^{(k)}_{ii} = \Omega^{(k')}_{ii}\), implying \(B^{(k)}_i = B^{(k')}_i\). Therefore, \(Z_i\) is shifted if and only if \(B^{(k)}_i \neq B^{(k')}_i\).
    According to Lemma \ref{lemma:H_null_space}, \(B^{(k)}_i \neq B^{(k')}_i\) if and only if \(B^{(k)}_iH \neq B^{(k')}_iH\), which is equivalent to \(M^{(k)}_i \neq M^{(k')}_i\).

    In conclusion, \(Z_i\) is shifted if and only if \(M^{(k)}_i \neq M^{(k')}_i\).
\end{proof}

\subsection{Proof of Theorem \ref{thm:shift_node}}
\label{app:proof_shift_node_thm}
\begin{lemma}
\label{lemma:Mk_flip_sign}
    Under problem setting, it is not possible for an intervention on the latent node \(Z_i\) to result in \(M^{(k)}_i = -M^{(k')}_i\).
\end{lemma}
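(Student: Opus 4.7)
The plan is to argue by contradiction, reducing the equation on $M^{(k)}_i$ to one on $B^{(k)}_i$ and then exploiting the sign structure of the diagonal entries of $B^{(k)}$. Suppose for contradiction that $M^{(k)}_i = -M^{(k')}_i$. Since $M^{(k)}_i = B^{(k)}_i H$ and $-M^{(k')}_i = (-B^{(k')}_i) H$, applying Lemma \ref{lemma:H_null_space} with $x = (B^{(k)}_i)^T$ and $y = (-B^{(k')}_i)^T$ immediately yields $B^{(k)}_i = -B^{(k')}_i$ as vectors in $\sR^d$. This step uses the assumption that $G$ has full column rank, so that $H = G^{\dagger}$ has full row rank and its transpose has trivial kernel.

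Next, I would inspect the $i$-th coordinate of this vector equation, i.e., the diagonal entry $B^{(k)}_{ii}$. From the model definition $B^{(k)} = (\Omega^{(k)})^{-\nicefrac{1}{2}}(I_d - A^{(k)})$ together with the fact that the weighted adjacency matrix $A^{(k)}$ of a DAG has zero diagonal, we obtain $B^{(k)}_{ii} = (\Omega^{(k)}_{ii})^{-\nicefrac{1}{2}}$, and analogously $B^{(k')}_{ii} = (\Omega^{(k')}_{ii})^{-\nicefrac{1}{2}}$. The problem setting specifies that $\Omega^{(k)}$ is a diagonal matrix with strictly positive entries, so both quantities are strictly positive reals.

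The supposed equality $B^{(k)}_{ii} = -B^{(k')}_{ii}$ would then assert that a strictly positive real number equals a strictly negative one, which is impossible. This contradiction rules out $M^{(k)}_i = -M^{(k')}_i$, completing the proof. The argument is short and the only subtle point is the first reduction; the main obstacle is simply to invoke Lemma \ref{lemma:H_null_space} correctly so that the sign-flip equation on $M^{(k)}_i$ transfers to $B^{(k)}_i$, after which the diagonal positivity of $B^{(k)}$ forces the conclusion without any further analysis of the intervention mechanism.
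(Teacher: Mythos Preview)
Your proof is correct and follows essentially the same approach as the paper: argue by contradiction, use Lemma~\ref{lemma:H_null_space} to pass from $M^{(k)}_i = -M^{(k')}_i$ to $B^{(k)}_i = -B^{(k')}_i$, and then derive a contradiction from the strict positivity of the diagonal entries $B^{(k)}_{ii} = (\Omega^{(k)}_{ii})^{-1/2}$.
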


\begin{proof}
    We prove this by contradiction. Suppose that \(M^{(k)}_i = -M^{(k')}_i\). According to Lemma \ref{lemma:H_null_space}, this would imply \(B^{(k)}_i = -B^{(k')}_i\). However, we know $B^{(k)} = (\Omega^{(k)})^{-1}(I_d - A^{(k)})$ where $A^{(k)}$ is the weight matrix for a DAG. Since $A^{(k)}_{ii}=0$, it follows that \(B^{(k)}_{ii} = (\Omega^{(k)})^{-1}_{ii}\). Therefore, both \(B^{(k)}_{ii}\) and \(B^{(k')}_{ii}\) are positive. It is impossible for \(B^{(k)}_i\) to be equal to \(-B^{(k')}_i\). Consequently, the scenario where \(M^{(k)}_i = -M^{(k')}_i\) cannot occur.
\end{proof}

\begin{proof}[Proof of Theorem \ref{thm:shift_node}]
\label{sec:proof_of_ident}
    Recall from the data generation process that
\begin{align*}
    M^{(k)}X^{(k)} = \epsilon^{(k)}.
\end{align*}
When input $X^{(k)}$ to ICA, we have $\widebar{M}^{(k)} = P^{(k)}D^{(k)}M^{(k)}$ and $\widebar{\epsilon}^{(k)} = P^{(k)}D^{(k)}\epsilon^{(k)}$. Without loss of generality, we assume that $\epsilon^{(k)}$ is ordered increasingly with respect to $\psi$. Thus, post sorting with respect to $\psi$, we eliminate the ambiguity of $P^{(k)}$, and we get $\widetilde{M}^{(k)} = D^{(k)}M^{(k)}$ and $\widetilde{\epsilon}^{(k)} = D^{(k)}\epsilon^{(k)}$. 

We are now ready to prove that $Z_i$ is not shifted if and only if $\widetilde{M}^{(k)} = \pm \widetilde{M}^{(k')}$. This immediately implies that if $Z_i$ is not shifted, then $M^{(k)}_i = M^{(k')}_i$, thus satisfying $\widetilde{M}^{(k)} = \pm \widetilde{M}^{(k')}$. 

If $\widetilde{M}^{(k)} = \pm \widetilde{M}^{(k')}$, there are two cases: $M^{(k)}_i = M^{(k')}_i$ or $M^{(k)}_i = -M^{(k')}_i$. We prove in Lemma \ref{lemma:Mk_flip_sign} that the scenario $M^{(k)}_i = -M^{(k')}_i$ cannot exist. The only surviving situation is $M^{(k)}_i = M^{(k')}_i$, which indicates that $Z_i$ is not shifted.

\end{proof}

\clearpage

\section{Experiments on Synthetic Data Compared with DCI}
\label{app:dci}

As described in Section \ref{sec:synthetic}, instead of generating the mixing function $G$ from $\text{Unif}[-0.25, 0.25]$, we set $G=I$, such that $X=Z$ and $Z$ can be directly observed. In this setup, finding general interventions in linear causal representations reduces to identifying general interventions in linear SEM, a setting for which the existing method DCI \cite{wang2018direct} is designed. Table \ref{tab:DCI} presents the performance comparison between our method and DCI under these conditions, demonstrating that our method outperforms DCI in most cases.

\begin{table}[h!]
\centering
\begin{tabular}{c|c|c c c c}
\hline
Graph Type & $d$ & Method & Precision & Recall & F1 \\ \hline
\multirow{6}{*}{ER 2} & \multirow{2}{*}{5} & DCI & 0.60 & 0.60 & 0.60 \\
 & & Ours & 0.80 & 0.80 & 0.80 \\ \cline{2-6}
 & \multirow{2}{*}{10} & DCI & 0.87 & 1.00 & 0.92 \\
 & & Ours & 1.00 & 1.00 & 1.00 \\ \cline{2-6}
 & \multirow{2}{*}{15} & DCI & 0.74 & 1.00 & 0.84 \\
 & & Ours & 0.66 & 1.00 & 0.78 \\ \hline
\multirow{4}{*}{ER 4} & \multirow{2}{*}{10} & DCI & 0.83 & 1.00 & 0.89 \\ 
 & & Ours & 1.00 & 1.00 & 1.00 \\ \cline{2-6}
 & \multirow{2}{*}{15} & DCI & 0.71 & 1.00 & 0.81 \\ 
 & & Ours & 0.62 & 0.93 & 0.73 \\ \hline
\multirow{6}{*}{SF 2} & \multirow{2}{*}{5} & DCI & 0.70 & 0.80 & 0.73 \\ 
 & & Ours & 1.00 & 1.00 & 1.00 \\ \cline{2-6}
 & \multirow{2}{*}{10} & DCI & 0.67 & 1.00 & 0.79 \\ 
 & & Ours & 1.00 & 1.00 & 1.00 \\ \cline{2-6}
 & \multirow{2}{*}{15} & DCI & 0.65 & 1.00 & 0.78 \\ 
 & & Ours & 0.70 & 0.93 & 0.78 \\ \hline
\multirow{6}{*}{SF 4} & \multirow{2}{*}{5} & DCI & 0.60 & 0.60 & 0.60 \\ 
 & & Ours & 0.80 & 0.80 & 0.80 \\ \cline{2-6}
 & \multirow{2}{*}{10} & DCI & 0.77 & 1.00 & 0.85 \\ 
 & & Ours & 1.00 & 1.00 & 1.00 \\ \cline{2-6}
 & \multirow{2}{*}{15} & DCI & 0.56 & 0.93 & 0.68 \\ 
 & & Ours & 0.67 & 1.00 & 0.79 \\ \hline
\end{tabular}
\vspace{4mm}
\caption{Comparison of Precision, Recall, and F1 scores for different graph types, $d$ values, and methods between our method and DCI.}
\label{tab:DCI}
\end{table}

\section{Additional Information on Real Data}
\label{sec:supp_experiment}

This section provides detailed information on the procedures employed in analyzing the real dataset.

\paragraph{Preprocessing} The initial dataset comprised 19,719 observations, which can be downloaded from \url{https://www.kaggle.com/datasets/lucasgreenwell/ocean-five-factor-personality-test-responses/data}. In the preprocessing phase, any observation with a missing value in any column was excluded, leaving a total of 19,710 observations for further analysis. Subsequently, we applied max-min value normalization to the scores of each question, ensuring that all scores were normalized to fall within the range \([0,1]\). This normalization step is crucial for achieving uniformity in the data scale, thereby facilitating accurate analysis and comparison across the dataset.

\paragraph{Labeling the Noise} To derive meaningful psychological insights, it is crucial to assign semantic labels to all latent nodes. Given that the noise components are pairwise distinct and unique to the latent node \(Z_i\), we consider intervening on each noise component, then remixing and observing the changes in the observational space. This approach enables us to assign semantic labels to both the noise components and their corresponding latent nodes. We utilize observations from the male dataset as the reference context, which comprises 7,603 observations. Following the initial step of our method, we obtain the sorted \(\widetilde{M}^{male}\) and \(\widetilde{\epsilon}^{male}\). The mixing function \({G}\) is derived from \((\widetilde{M}^{male})^{\dagger}\).

To identify the semantic label for the first component of \(\widetilde{\epsilon}\), we set its corresponding noise vector component to \(0\), effectively nullifying the first component of \(\widetilde{\epsilon}^{male}\). This intervention yields an estimated noise matrix samples from \(\widetilde{\epsilon}^{male}_{inv}\), denoted as \(\widetilde{\bm\epsilon}^{male}_{inv}\). The intervened reconstruction, \(\bm{X}^{male}_{inv} = {G}(\widetilde{\bm{\epsilon}}^{male}_{intv})^T\), and the original score distribution, \(\bm{X}^{male} = {G}(\widetilde{\bm{\epsilon}}^{male})^T\), allow us to compare question scores pre- and post-intervention. Figure \ref{fig:psy_index1} plots these distributions, revealing significant shifts for questions pertaining to the \textit{Agreeableness} dimension, with minimal impact on other scores, thereby identifying the first noise component as \textit{Agreeableness}. This process is replicated for the second through fifth columns of \(\bm{\epsilon}^{male}\), with results illustrated in Figures \ref{fig:psy_index2}, \ref{fig:psy_index3}, \ref{fig:psy_index4}, and \ref{fig:psy_index5}. Each plot demonstrates that interventions result in significant distribution changes for questions related to a single personality dimension, with negligible effects on others. Consequently, we label these noise components as \textit{Openness}, \textit{Conscientiousness}, \textit{Extraversion}, and \textit{Neuroticism}, respectively. These labels will be used for all the following analysis.

\paragraph{Shifted Nodes Detection} We then applied our method to data from the male and female contexts. The calculated \(L_i^{male,female}\) values are \(\{0.074, 0.0497, 0.078, 0.638, 0.633\}\). Based on these results, we identify shifts in the last two personality dimensions, specifically labeled as \textit{Extraversion} and \textit{Neuroticism}. Additionally, we conducted a comparative analysis of personality dimensions between the US and UK, which have 8,753 and 1,531 observations, respectively. The computed \(L_i^{US,UK}\) values are \(\{0.302, 0.258, 0.109, 0.189, 0.088\}\), indicating that no latent node is considered as having undergone shifts between these two countries.

\begin{figure}[ht]
\centering
\includegraphics[width=0.9\textwidth]{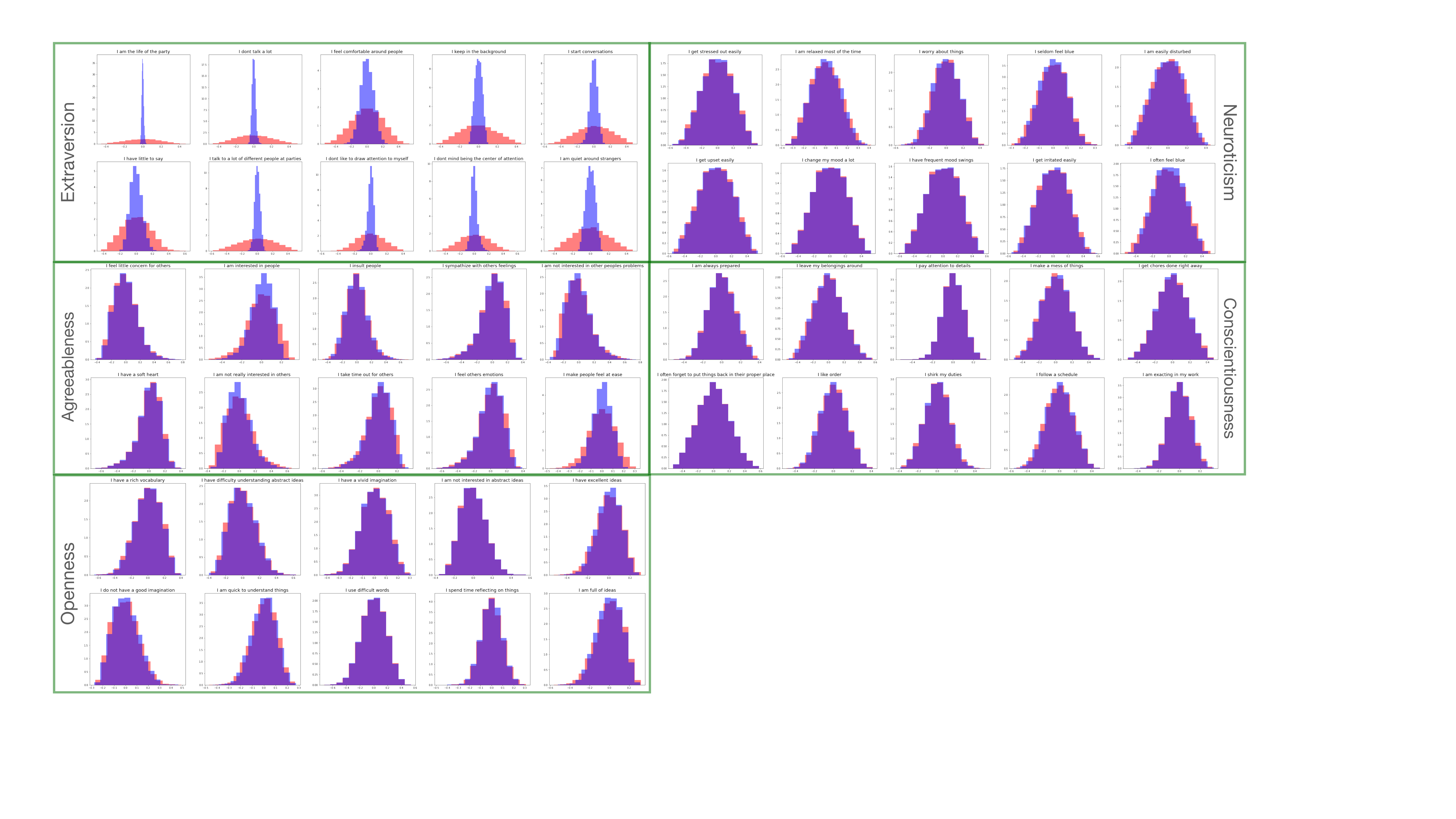}
\caption{Intervention on the fourth component of the noise vector and subsequent re-mixing generate a new observed space — a new score distribution. Notably, only \textit{Extraversion} exhibits significant changes after intervention, leading us to label the fourth component of the noise vector (after sorting) as \textit{Extraversion}.}
\label{fig:psy_index4}
\end{figure}

\begin{figure}[ht]
\centering
\includegraphics[width=0.9\textwidth]{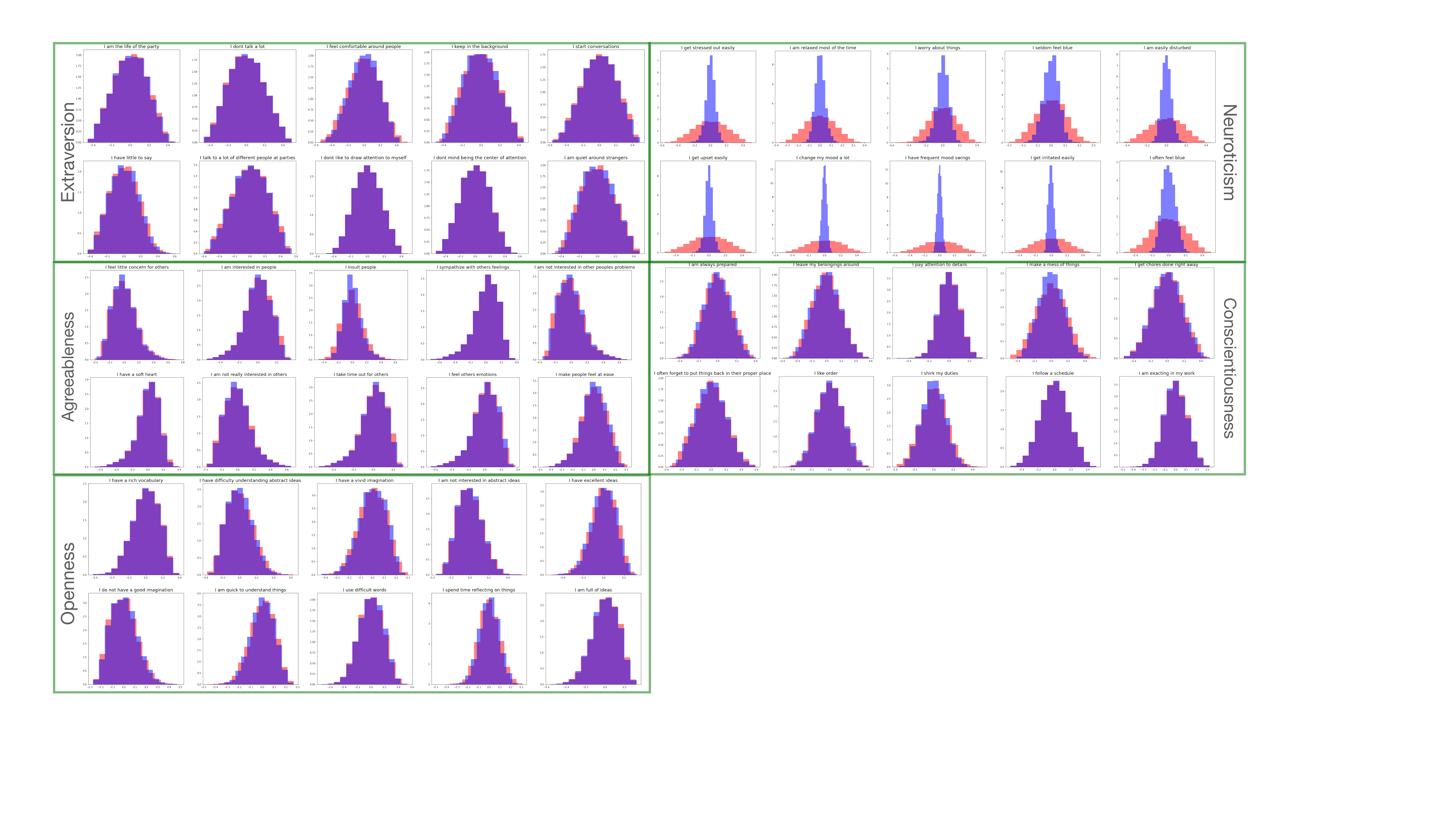}
\caption{Intervention on the fifth component of the noise vector and subsequent re-mixing generate a new observed space — a new score distribution. Notably, only \textit{Neuroticism} exhibits significant changes after intervention, leading us to label the fifth component of the noise vector (after sorting) as \textit{Neuroticism}.}
\label{fig:psy_index5}
\end{figure}

\begin{figure}[ht]
\centering
\includegraphics[width=0.9\textwidth]{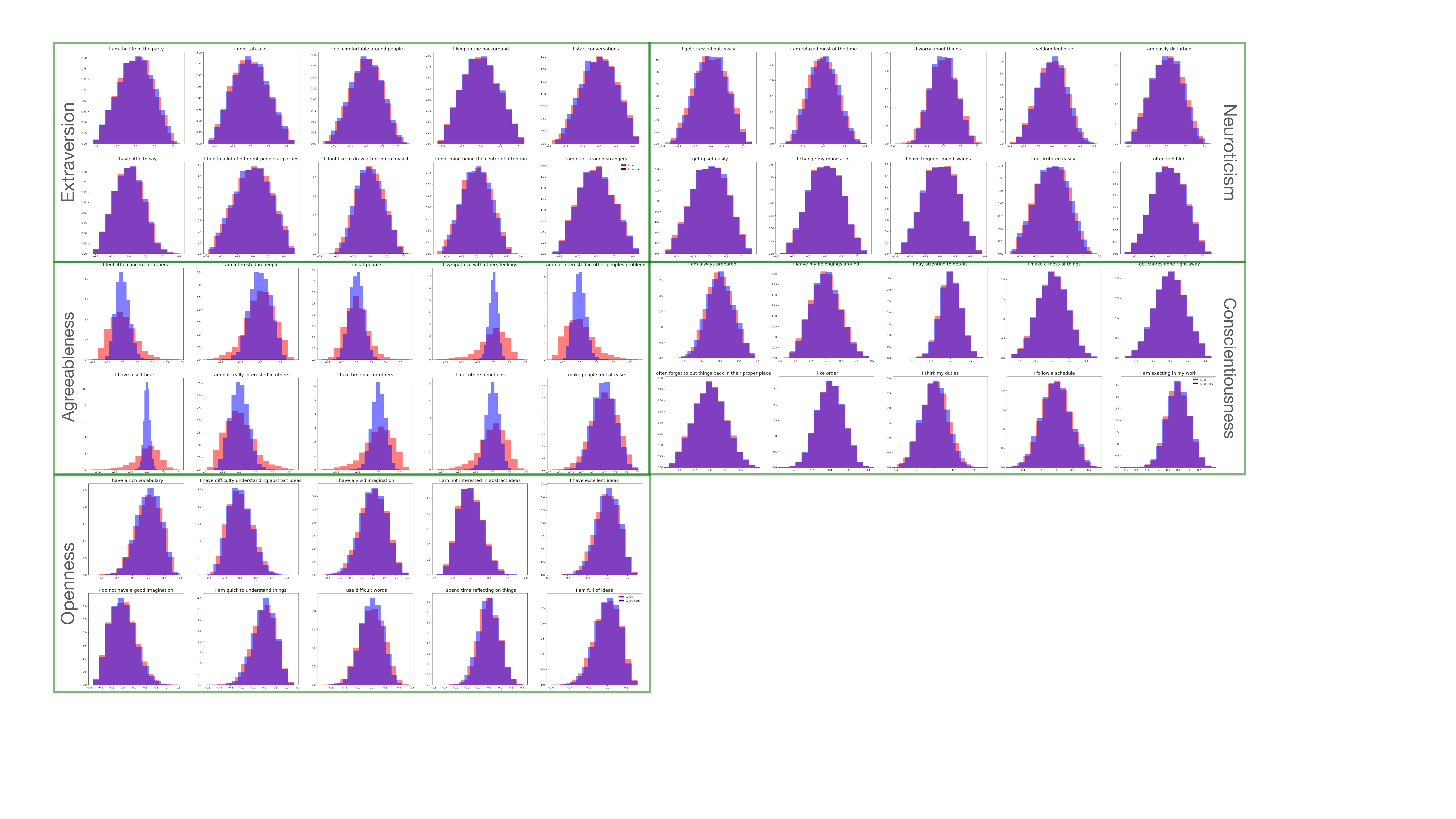}
\caption{Intervention on the first component of the noise vector and subsequent re-mixing generate a new observed space — a new score distribution. Notably, only \textit{Agreeableness} exhibits significant changes after intervention, leading us to label the first component of the noise vector (after sorting) as \textit{Agreeableness}.}
\label{fig:psy_index1}
\end{figure}

\begin{figure}[ht]
\centering
\includegraphics[width=0.9\textwidth]{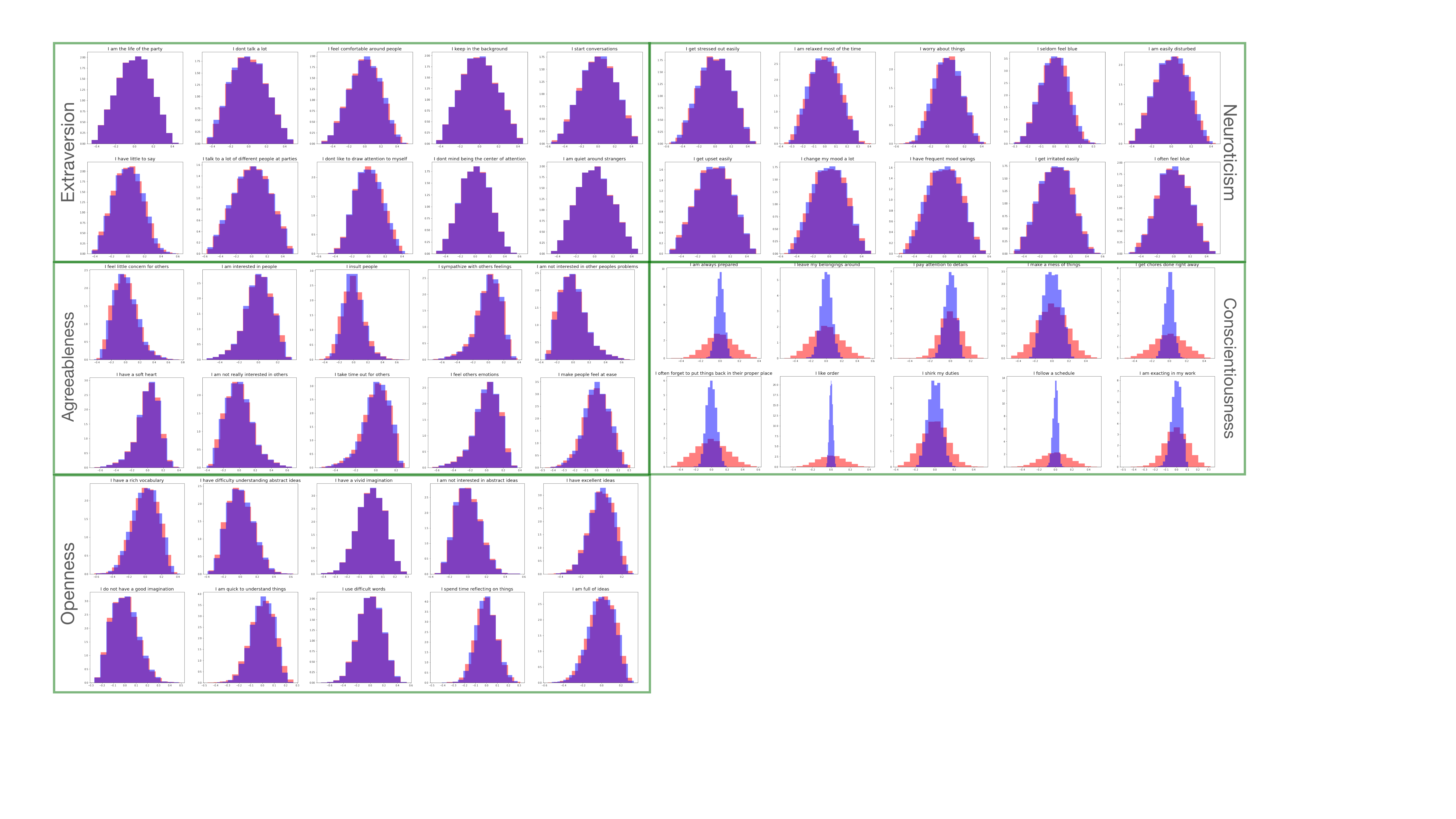}
\caption{Intervention on the third component of the noise vector and subsequent re-mixing generate a new observed space — a new score distribution. Notably, only \textit{Conscientiousness} exhibits significant changes after intervention, leading us to label the third component of the noise vector (after sorting) as \textit{Conscientiousness}.}
\label{fig:psy_index3}
\end{figure}

\begin{figure}[ht]
\centering
\includegraphics[width=0.9\textwidth]{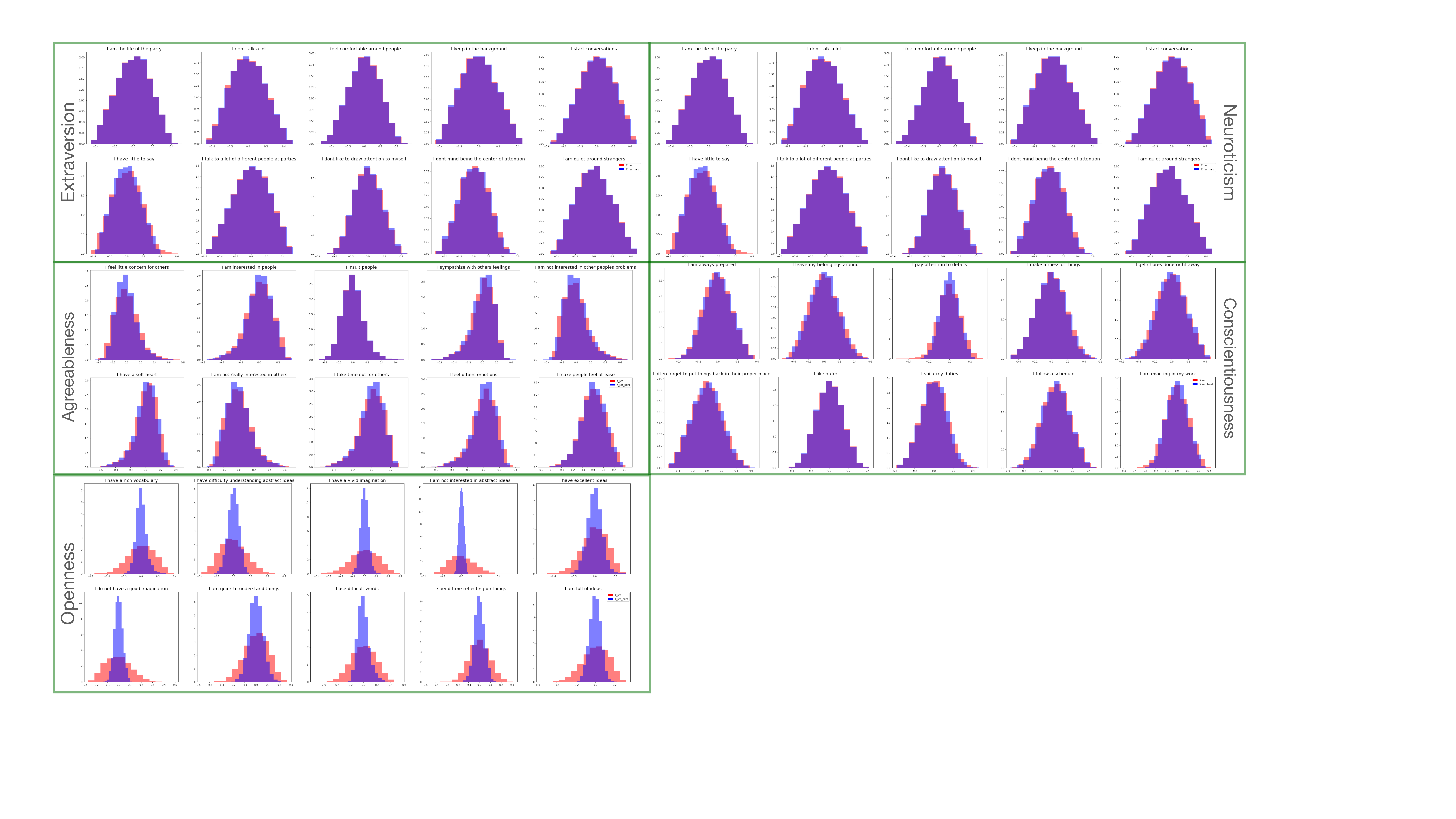}
\caption{Intervention on the second component of the noise vector and subsequent re-mixing generate a new observed space — a new score distribution. Notably, only \textit{Openness} exhibits significant changes after intervention, leading us to label the second component of the noise vector (after sorting) as \textit{Openness}.}
\label{fig:psy_index2}
\end{figure}

\end{document}